\documentclass[10pt,journal,compsoc]{IEEEtran}

\ifCLASSOPTIONcompsoc
  \usepackage[nocompress]{cite}
\else
  \usepackage{cite}
\fi

\ifCLASSINFOpdf
  \usepackage[pdftex]{graphicx}
  \graphicspath{{../pdf/}{../jpeg/}}
  \DeclareGraphicsExtensions{.pdf,.jpeg,.png}
\else
  \usepackage[dvips]{graphicx}
  \graphicspath{{../eps/}}
  \DeclareGraphicsExtensions{.eps}
\fi

\usepackage{amsmath,amsfonts,amssymb,amsthm}
\usepackage{algorithmic}
\usepackage{array}

\ifCLASSOPTIONcompsoc
  \usepackage[caption=false,font=footnotesize,labelfont=sf,textfont=sf]{subfig}
\else
  \usepackage[caption=false,font=footnotesize]{subfig}
\fi

\usepackage{stfloats}

\usepackage{url}

\newtheorem{theorem}{Theorem}
\newtheorem{lemma}[theorem]{Lemma}
\newtheorem{corollary}[theorem]{Corollary}
\newtheorem{definition}[theorem]{Definition}

\usepackage{tikz}
\usepackage{tikz-3dplot}
\usepackage{comment}
\usepackage[utf8]{inputenc}
\usepackage{textcomp}
\DeclareUnicodeCharacter{2203}{$\exists$}
\DeclareUnicodeCharacter{211D}{\ensuremath{\mathbb{R}}}

\begin{document}

\title{NP-Hardness of Minimizing Neurons in Two-Hidden-Layer ReLU Neural Networks}

\author{Sangrock~Lee ~\IEEEmembership{}%
\IEEEcompsocitemizethanks{\IEEEcompsocthanksitem S. Lee (corresponding author) is with the FAMU--FSU College of Engineering, Florida State University, Tallahassee, FL, 32310.\protect\\
E-mail: sl22br@fsu.edu}%
}

% The paper headers
\markboth{}%
{Lee \MakeLowercase{\textit{et al.}}: NP-Hardness of Minimizing Neurons in Two-Hidden-Layer ReLU Neural Networks}

\IEEEtitleabstractindextext{%

\begin{abstract}
A fundamental question in neural network architecture optimization is whether the minimum hidden-neuron count required to approximate a target function within a prescribed tolerance can be computed efficiently. This paper resolves this question for two-hidden-layer ReLU networks under an \(L^p(\mathbb R^d,\mathbb R^m)\) approximation constraint. For every fixed \(d\ge1\), \(m\ge1\), and \(1\le p<\infty\), we prove that computing the optimum exactly is NP-hard. The result holds even when the target is represented by a rational ReLU network whose realization is nonzero, componentwise nonnegative, compactly supported, globally Lipschitz, and continuous piecewise affine. The polynomial-time reduction from \(3\)-SAT produces an architecture gap in which unsatisfiable formulas yield an optimum of zero, whereas satisfiable formulas yield an optimum of at least \(d+2\). The proof constructs compactly supported polyhedral frustum functions realized by two-hidden-layer ReLU networks and establishes the \(L^p\)-density of finite linear combinations of box-frustum functions. The results offer theoretical justification for employing heuristic approximation methods in the design of ReLU neural networks, illustrating that attaining a minimal configuration within polynomial time is computationally unachievable.
\end{abstract}

\begin{IEEEkeywords}
NP-hardness, ReLU neural network, \(L^p\) function, 3-SAT.
\end{IEEEkeywords}}

\maketitle

\IEEEraisesectionheading{\section{Introduction}\label{sec:introduction}}

\IEEEPARstart{O}{ptimizing} the number of neurons in deep learning architectures is critical for balancing computational efficiency and model performance. Oversized networks increase the risk of overfitting, while undersized ones may fail to capture essential data structures\cite{Geman}. Given the extensive use of feedforward neural networks in applications such as image classification \cite{AlexNet,He}, natural language processing\cite{NLP, BERT}, and scientific computing \cite{PDE1, PDE2}, there is a need for methods to design minimal yet effective models \cite{Lecun, Lottery}. Despite efforts through analytic approaches, heuristic methods, and complexity analysis, determining the optimal number of parameters remains unresolved. Our research examines the fundamental computational hurdles in optimizing ReLU network structures and provides a complexity-theoretic rationale for approximation-based methods in network design by proving a worst-case NP-hardness result for exact hidden-neuron minimization.

Extensive work has sought analytical descriptions of neural network architectures that achieve universal function approximation. Cybenko \cite{Cybenko} showed that a single-hidden-layer network with sigmoid activation can approximate any continuous function on a compact domain arbitrarily well, provided that a sufficient finite number of hidden neurons is chosen according to the desired accuracy. Hornik et al.\ \cite{Hornik} extended this result to multilayer feedforward networks, establishing universality for Borel-measurable functions. Maiorov and Pinkus \cite{Maiorov} showed that uniform approximation of any continuous function on a compact domain with a two-hidden-layer network requires a width that grows linearly with the input dimension. Their proof employs specially tailored sigmoidal activations; analytic, strictly monotone realizations of such functions are provided by Guliyev and Ismailov \cite{Guliyev}. Because these activations are not standard choices like ReLU or tanh, they must be evaluated separately, which may limit practical applicability. Gripenberg \cite{Gripenberg} demonstrated that bounded width networks can achieve universality when depth is allowed to increase. Pinkus \cite{Pinkus_MLP} surveys theoretical properties of multilayer feedforward neural networks, including universal approximation, derivative approximation, interpolation, and approximation bounds, with an emphasis on single-hidden-layer models and brief coverage of deeper architectures. Ismailov \cite{ismailov} reviews width-minimality results that depend on input dimension. In one dimension, with suitably chosen sigmoidal activations, a single hidden neuron can achieve uniform approximation, so the required width does not depend on target accuracy. For dimensions greater than one, for any fixed width, single-hidden-layer networks cannot approximate all continuous multivariate functions arbitrarily well, so the required width must increase as the accuracy requirement tightens. For two hidden layers and a computable sigmoidal activation, universality is attainable with width that grows linearly with the input dimension. Further studies \cite{Lu, Hanin, Kidger} refined bounds on necessary network width for universal approximation. Park et al. \cite{Park} established minimal neuron requirements for ReLU networks for universal approximation in \(L^p\) space when the layer depth is allowed to grow. Cai \cite{cai} generalized the findings of Park et al.\cite{Park} to arbitrary activation functions. These analytical insights have advanced understanding, yet the precise configurations of neural networks with universal approximation capability remain under active investigation.

In addressing the challenge of determining the optimal configuration of neural networks, heuristic approaches have played a vital role, as evidenced by Neural Architecture Search (NAS). Zoph et al. \cite{Zoph} introduced a scalable and transferable NAS for image classification, utilizing a controller to enhance the search process's efficiency. Efficient Neural Architecture Search (ENAS) \cite{Pham} accelerates the NAS process by sharing parameters among child models, thereby reducing the computational resources required for architecture search. Differentiable Architecture Search (DARTS) \cite{DARTS} demonstrates that NAS can be conducted through the optimization of network architectures via gradient descent within a continuous search space. Álvarez and Salzmann \cite{Alvarez} introduced a method that dynamically optimizes the number of neurons in each layer during training by leveraging a group sparsity regularizer. Lee and Lee \cite{Lee} presented NeuralScale, a framework employing iterative pruning and 'architecture descent' to optimize neuron configurations, improving accuracy in architecture-fixed settings. Kapanova et al. \cite{Kapanova} proposed an evolutionary algorithm for neural network architecture search, considering neurons, layers, weights, and activation functions. Although significant progress has been made, identifying the optimal network size remains a challenging and unresolved problem in NAS, highlighting the complexity and exploratory nature of current methodologies.

Different aspects of neural networks are examined through the lens of computational complexity, using networks with various widths and depths. Blum and Rivest \cite{Blum} demonstrate that the process of training a neural network, characterized by a single hidden layer with merely two nodes employing a step activation function and a single-node output, is classified as an NP-complete problem. This finding underscores the inherent computational difficulties that exist even in the most basic configurations of neural networks. Furthermore, this outcome has been generalized to scenarios involving the training of neural networks with ReLU activation functions \cite{Boob, Goel}. Froese and Hertrich \cite{Froese} have shown that the training of neural networks is NP-hard in fixed dimensions. Bertschinger et al. \cite{Bertschinger} established that training a neural network with a single hidden layer is $\exists \mathbb{R}$-complete. Despite these insights, the complexity aspects of structural optimization in neural network models remain largely unexplored.

This paper establishes that exact minimization of the hidden-neuron count in two-hidden-layer ReLU networks is NP-hard for every fixed input dimension \(d\ge1\), every fixed output dimension \(m\ge1\), and every \(1\le p<\infty\). The result holds even for rational target networks whose realizations are nonzero, componentwise nonnegative, compactly supported, globally Lipschitz, and continuous piecewise affine. The reduction distinguishes instances with optimum zero from instances with optimum at least \(d+2\). This result identifies a worst-case computational obstruction to exact architecture selection and provides a complexity-theoretic basis for heuristic, approximate, or restricted architecture-selection methods for ReLU neural networks.

The remainder of this paper is organized as follows. Section II fixes the notation for ReLU neural networks, hidden-layer widths, total neuron counts, and bounded polytopes. Section III constructs compactly supported polyhedral frustum functions from bounded polytopes and shows that these functions are realized by two-hidden-layer ReLU networks. Section IV proves that finite linear combinations of compactly supported box-frustum functions are dense in \(L^p(\mathbb R^d)\) and extends this approximation result to vector-valued functions. Section V defines the exact two-hidden-layer neuron minimization problem, proves an \(L^p\) degeneracy result for networks of width at most the input dimension, constructs a satisfiability-dependent target with a nonzero nonnegative baseline, and derives an architecture gap that implies NP-hardness. Section VI concludes with implications for exact architecture selection and heuristic network design.

\section{Preliminaries}
\label{sec_preliminaries}

\begin{definition}[ReLU neural network]
\label{def:NN}
Let \(\sigma:\mathbb R\to\mathbb R\) be the ReLU activation function defined by
\[
\sigma(t)=\max\{0,t\}.
\]
A feedforward ReLU neural network with input dimension \(d\), output dimension \(m\), and \(L\) hidden layers is defined by
\[
\mathbf h^{(0)}=\mathbf x,
\]
\[
\mathbf h^{(l)}
=
\sigma
\left(
W^{(l)}\mathbf h^{(l-1)}+\mathbf b^{(l)}
\right)
\]
for \(l=1,\ldots,L\), and
\[
\mathbf y
=
W^{(L+1)}\mathbf h^{(L)}+\mathbf b^{(L+1)}.
\]
Here \(\mathbf x\in\mathbb R^d\), \(\mathbf y\in\mathbb R^m\), the ReLU activation is applied componentwise, \(W^{(l)}\) is a real matrix, and \(\mathbf b^{(l)}\) is a real vector. The width of the \(l\)-th hidden layer is the number of components of \(\mathbf h^{(l)}\). The total number of hidden neurons is the sum of the widths of all hidden layers.
\end{definition}

\begin{definition}[Bounded polytope]
\label{def_bounded_polytope}
A bounded polytope in \(\mathbb R^d\) is a nonempty bounded set of the form
\[
P
=
\bigcap_{i=1}^q
\left\{
\mathbf x\in\mathbb R^d
\mid
\ell_i(\mathbf x)\ge0
\right\},
\]
where \(q\in\mathbb N\) and
\[
\ell_i(\mathbf x)
=
\mathbf a_i^\top\mathbf x+b_i
\]
for some \(\mathbf a_i\in\mathbb R^d\) and \(b_i\in\mathbb R\).
\end{definition}

\section{Formation of Frustum Functions}
\label{sec_frustum_functions}

A frustum is the portion of a solid geometry, such as a cone or pyramid, lying between two parallel cutting planes. Frustums are illustrated in Fig. \ref{fig:Frustum}. The construction below uses a ReLU network to produce a compactly supported polyhedral plateau function. The function is equal to one on a bounded polytope and vanishes outside a compact set. On the intermediate set where its values lie strictly between zero and one, the function is piecewise affine, so its graph consists of a flat plateau over the polytope together with polyhedral lateral faces. After the total violation function is introduced, this intermediate set is defined precisely as the transition region. The resulting function is called a frustum function.

\begin{figure}[!t]
    \centering
    \begin{minipage}[b]{0.48\linewidth}
        \centering
        \resizebox{\linewidth}{!}{%
            %\documentclass{standalone}
%\usepackage{amsmath, amssymb, amsthm}
%\usepackage{tikz}
%\usepackage{tikz-3dplot}
%\usetikzlibrary{calc}
%\begin{document}
\tdplotsetmaincoords{70}{133}
\begin{tikzpicture}[tdplot_main_coords, scale=0.7]

% Define the side lengths of the squares
\def\sideBottom{4} % Side length of the bottom square
\def\sideTop{2.2} % Side length of the top square
\def\height{3} % Height of the frustum

% Bottom square vertices
\coordinate (B1) at (-\sideBottom/2, -\sideBottom/2, 0);
\coordinate (B2) at (\sideBottom/2, -\sideBottom/2, 0);
\coordinate (B3) at (\sideBottom/2, \sideBottom/2, 0);
\coordinate (B4) at (-\sideBottom/2, \sideBottom/2, 0);

% Top square vertices
\coordinate (T1) at (-\sideTop/2, -\sideTop/2, \height);
\coordinate (T2) at (\sideTop/2, -\sideTop/2, \height);
\coordinate (T3) at (\sideTop/2, \sideTop/2, \height);
\coordinate (T4) at (-\sideTop/2, \sideTop/2, \height);

% Draw the bottom square visible edges
\draw[thick] (B2) -- (B3);
\draw[thick] (B3) -- (B4);

% Draw the bottom square dashed edges
\draw[dashed] (B1) -- (B2);
\draw[dashed] (B1) -- (B4);

% Draw the top square
\draw[fill=lightgray!30] (T1) -- (T2) -- (T3) -- (T4) -- cycle;
\fill[darkgray!30] (B1) -- (B2) -- (B3) -- (B4) -- cycle;

% Draw the sides of the frustum
\draw[dashed] (B1) -- (T1);
\draw (B2) -- (T2);
\draw (B3) -- (T3);
\draw (B4) -- (T4);

\end{tikzpicture}
%\end{document}
        }
        (a)
    \end{minipage}
    \hfill
    \begin{minipage}[b]{0.48\linewidth}
        \centering
        \resizebox{\linewidth}{!}{%
            %\documentclass{standalone}
%\usepackage{amsmath, amssymb, amsthm}
%\usepackage{tikz}
%\usepackage{tikz-3dplot}
%\usetikzlibrary{calc}
%\begin{document}

\tdplotsetmaincoords{70}{105} % Setting the main coordinates to match the square frustum orientation
\begin{tikzpicture}[tdplot_main_coords]
\def\RI{2}
\def\RII{1.25}

% Draw the solid lines of the hexagon
\draw[thick] (\RI,0) -- (60:\RI) node at (60:\RI) (R1-60) {};
\draw[thick] (\RI,0) -- (0:\RI) node at (0:\RI) (R1-0) {};
\draw[thick,dashed] (120:\RI) -- (180:\RI) node at (180:\RI) (R1-180) {};
\draw[thick] (120:\RI) node at (120:\RI) (R1-120) {} -- (60:\RI);

% Draw the dashed lines of the hexagon
\draw[thick, dashed] (180:\RI) -- (240:\RI) node at (240:\RI) (R1-240) {};
\draw[thick] (300:\RI) -- (0:\RI);
\draw[thick, dashed] (240:\RI) -- (300:\RI) node at (300:\RI) (R1-300) {};

% Fill the hexagon
\path[fill=darkgray!30] (\RI,0)
  \foreach \x in {0,60,120,180,240,300} { --  (\x:\RI)};

% Draw the upper hexagon
\begin{scope}[yshift=2cm]
\draw[fill=lightgray!30] (\RII,0)
  \foreach \x in {0,60,120,180,240,300,360}
    { --  (\x:\RII) node at (\x:\RII) (R2-\x) {}};
\end{scope}

% Connect the vertices of the two hexagons
\foreach \x in {0,60,120,300} { \draw (R1-\x.center)--(R2-\x.center); };
\foreach \x in {180,240} { \draw[dashed] (R1-\x.center)--(R2-\x.center); };

% Dashed lines for the hidden edges
\draw[dashed] (R1-300.center)--(R2-300.center);
\draw[dashed] (R1-240.center)--(R2-240.center);
\draw[dashed] (R1-180.center)--(R2-180.center);
\end{tikzpicture}

%\end{document}
        }
        (b)
    \end{minipage}
    \caption{Illustration of frustums with polygonal bases. (a) Square-based frustum with parallel square top and bottom faces. (b) Hexagon-based frustum with parallel hexagonal bases connected by trapezoidal lateral faces.}
    \label{fig:Frustum}
\end{figure}

It starts from an arbitrary nonempty bounded polytope
\[
P
=
\bigcap_{i=1}^q
\left\{
\mathbf x\in\mathbb R^d
\mid
\ell_i(\mathbf x)\ge0
\right\},
\]
where
\[
\ell_i(\mathbf x)
=
\mathbf a_i^\top\mathbf x+b_i.
\]
Let \(c_i>0\) for \(i=1,\ldots,q\), and write
\[
\mathbf c=(c_1,\ldots,c_q).
\]
The first hidden layer computes the violation variables
\[
u_i(\mathbf x)
=
\sigma(-c_i\ell_i(\mathbf x)).
\]
Thus \(u_i(\mathbf x)=0\) if the \(i\)-th inequality holds, and \(u_i(\mathbf x)>0\) if the \(i\)-th inequality fails.
Define the total violation
\[
V_{P,\mathbf c}(\mathbf x)
=
\sum_{i=1}^q
\sigma(-c_i\ell_i(\mathbf x)).
\]
For \(\rho>0\), define
\[
\Theta_{P,\rho,\mathbf c}(\mathbf x)
=
\sigma
\left(
1-\frac{1}{\rho}V_{P,\mathbf c}(\mathbf x)
\right).
\]
Equivalently,
\[
\Theta_{P,\rho,\mathbf c}(\mathbf x)
=
\sigma
\left(
1-\frac{1}{\rho}
\sum_{i=1}^q
\sigma(-c_i\ell_i(\mathbf x))
\right).
\]
The transition region is
\[
\Delta_{P,\rho,\mathbf c}
=
\left\{
\mathbf x\in\mathbb R^d
\mid
0<V_{P,\mathbf c}(\mathbf x)<\rho
\right\}.
\]
On \(P\), all inequalities are satisfied, so \(V_{P,\mathbf c}=0\) and \(\Theta_{P,\rho,\mathbf c}=1\). On \(\Delta_{P,\rho,\mathbf c}\), the function is positive and piecewise affine. Whenever \(V_{P,\mathbf c}\ge\rho\), the function is zero. Hence
\[
\operatorname{supp}\Theta_{P,\rho,\mathbf c}
\subset
\left\{
\mathbf x\in\mathbb R^d
\mid
V_{P,\mathbf c}(\mathbf x)\le\rho
\right\}.
\]

This construction has a direct functional interpretation. The first hidden layer measures how much each defining inequality of \(P\) is violated. Thus all violation variables are zero on \(P\). The total violation \(V_{P,\mathbf c}\) then determines the value of the scalar function \(\Theta_{P,\rho,\mathbf c}\). Consequently, \(P\) is the unit-level plateau of \(\Theta_{P,\rho,\mathbf c}\), points with \(0<V_{P,\mathbf c}<\rho\) form the transition region, and points with \(V_{P,\mathbf c}\ge\rho\) are mapped to zero.

\begin{lemma}[Polyhedral frustum realization]
\label{lem_polyhedral_frustum_realization}
Let \(P\) be a nonempty bounded polytope in \(\mathbb R^d\) of the form
\[
P
=
\bigcap_{i=1}^q
\left\{
\mathbf x\in\mathbb R^d
\mid
\ell_i(\mathbf x)\ge0
\right\},
\]
where
\[
\ell_i(\mathbf x)=\mathbf a_i^\top\mathbf x+b_i.
\]
Let \(c_i>0\) for \(i=1,\ldots,q\), let \(\mathbf c=(c_1,\ldots,c_q)\), and let \(\rho>0\). Then
\[
0\le\Theta_{P,\rho,\mathbf c}\le1,
\]
\[
\Theta_{P,\rho,\mathbf c}=1
\]
on \(P\), and \(\Theta_{P,\rho,\mathbf c}\) is compactly supported. Moreover, \(\Theta_{P,\rho,\mathbf c}\) is realized by a two-hidden-layer ReLU network with \(q\) neurons in the first hidden layer and one neuron in the second hidden layer.
\end{lemma}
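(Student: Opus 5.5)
The proof has three parts: the pointwise bounds together with the plateau identity, compactness of the support, and the explicit network. The bounds and the plateau are immediate from the definition. Since \(\Theta_{P,\rho,\mathbf c}=\sigma(\cdot)\), it is nonnegative. Each \(u_i=\sigma(-c_i\ell_i)\ge0\), so \(V_{P,\mathbf c}\ge0\), hence \(1-V_{P,\mathbf c}/\rho\le1\) and \(\Theta_{P,\rho,\mathbf c}\le1\) because \(\sigma\) is monotone and \(\sigma(1)=1\). On \(P\) every \(\ell_i\ge0\), so every \(u_i=0\), \(V_{P,\mathbf c}=0\), and \(\Theta_{P,\rho,\mathbf c}=\sigma(1)=1\).

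For the realization, I will write down the weights explicitly. Let \(A\) be the \(q\times d\) matrix whose \(i\)-th row is \(\mathbf a_i^\top\) and \(\mathbf b=(b_1,\ldots,b_q)^\top\). Take \(W^{(1)}=-\operatorname{diag}(\mathbf c)A\) and \(\mathbf b^{(1)}=-\operatorname{diag}(\mathbf c)\mathbf b\). Then \(\mathbf h^{(1)}=(u_1,\ldots,u_q)\), which uses \(q\) neurons. In the second layer take \(W^{(2)}=-\frac1\rho\mathbf 1^\top\) and \(b^{(2)}=1\), giving a single neuron \(h^{(2)}=\Theta_{P,\rho,\mathbf c}\). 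For the output take \(W^{(3)}=1\) and \(b^{(3)}=0\). This matches Definition~\ref{def:NN} with \(L=2\).

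The main step is compact support. We already have \(\operatorname{supp}\Theta\subset\{V_{P,\mathbf c}\le\rho\}\); this set is closed because \(V_{P,\mathbf c}\) is continuous, so it suffices to show it is bounded. I will use the recession-cone argument. Since \(P\) is nonempty and bounded, there is no nonzero \(\mathbf v\) with \(\mathbf a_i^\top\mathbf v\ge0\) for all \(i\). Otherwise, for \(\mathbf x_0\in P\), the ray \(\mathbf x_0+t\mathbf v\), \(t\ge0\), would stay in \(P\).

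Now consider the continuous function \(g(\mathbf v)=\max_i(-c_i\mathbf a_i^\top\mathbf v)\) on the unit sphere. It is strictly positive there, and by compactness it has a minimum \(\mu>0\). For \(\mathbf x=\mathbf x_0+r\mathbf v\) with \(\|\mathbf v\|=1\), choose \(i\) attaining the maximum. Then
\[
V_{P,\mathbf c}(\mathbf x)\ge -c_i\ell_i(\mathbf x)= -c_i\ell_i(\mathbf x_0)+r(-c_i\mathbf a_i^\top\mathbf v)\ge -C+r\mu,
\]
where \(C=\max_i c_i\ell_i(\mathbf x_0)\). Hence \(V_{P,\mathbf c}(\mathbf x)>\rho\) whenever \(r>(\rho+C)/\mu\). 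So \(\{V_{P,\mathbf c}\le\rho\}\) lies in a ball around \(\mathbf x_0\), and it is compact.

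The only subtle point is the fact that a bounded nonempty polyhedron has trivial recession cone, and I will verify it directly via the ray argument above.
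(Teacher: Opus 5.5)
Your proposal is correct and follows essentially the same route as the paper: the ray argument showing there is no nonzero \(\mathbf v\) with \(\mathbf a_i^\top\mathbf v\ge0\) for all \(i\), a positive minimum of a continuous function on the unit sphere, and a linear lower bound on \(V_{P,\mathbf c}\) along rays, which bounds \(\{V_{P,\mathbf c}\le\rho\}\). The differences are cosmetic. You use \(\max_i(-c_i\mathbf a_i^\top\mathbf v)\) in place of the paper's \(\sum_j\sigma(-c_j\mathbf a_j^\top\mathbf v)\), and you center the rays at \(\mathbf x_0\in P\) rather than at the origin; this lets you skip the inequality \(\sigma(r-s)\ge\sigma(r)-|s|\).
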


\begin{proof}
For every \(i\), the quantity \(\sigma(-c_i\ell_i(\mathbf x))\) is nonnegative. Hence
\[
V_{P,\mathbf c}(\mathbf x)\ge0,
\]
and therefore
\[
1-\frac{1}{\rho}V_{P,\mathbf c}(\mathbf x)\le1.
\]
Since ReLU is nonnegative and satisfies \(\sigma(t)\le1\) whenever \(t\le1\), it follows that
\[
0\le\Theta_{P,\rho,\mathbf c}(\mathbf x)\le1
\]
for every \(\mathbf x\in\mathbb R^d\).

If \(\mathbf x\in P\), then \(\ell_i(\mathbf x)\ge0\) for every \(i\). Since \(c_i>0\), we have
\[
\sigma(-c_i\ell_i(\mathbf x))=0
\]
for every \(i\). Hence
\[
V_{P,\mathbf c}(\mathbf x)=0,
\]
and so
\[
\Theta_{P,\rho,\mathbf c}(\mathbf x)=1.
\]

The network realization follows from the displayed formula. The first hidden layer computes the \(q\) functions
\[
\sigma(-c_i\ell_i(\mathbf x))
\]
for \(i=1,\ldots,q\). The second hidden layer computes
\[
\sigma
\left(
1-\frac{1}{\rho}
\sum_{i=1}^q
\sigma(-c_i\ell_i(\mathbf x))
\right).
\]
The affine output layer is the identity.

It remains to prove compact support. Choose \(\mathbf x_0\in P\). Since \(P\) is bounded, it cannot contain a ray starting from \(\mathbf x_0\). Hence there is no nonzero vector \(\mathbf v\) such that
\[
\mathbf a_i^\top\mathbf v\ge0
\]
for every \(i=1,\ldots,q\). For contradiction, assume that such a nonzero vector \(\mathbf v\) exists, then for every \(t\ge0\) and every \(i=1,\ldots,q\),
\[
\ell_i(\mathbf x_0+t\mathbf v)
=
\ell_i(\mathbf x_0)+t\mathbf a_i^\top\mathbf v
\ge0.
\]
Thus the point \(\mathbf x_0+t\mathbf v\) satisfies every defining inequality of \(P\). Hence
\[
\{\mathbf x_0+t\mathbf v\mid t\ge0\}\subset P.
\]
Since \(\mathbf v\ne0\),
\[
\|\mathbf x_0+t\mathbf v\|_2
\ge
t\|\mathbf v\|_2-\|\mathbf x_0\|_2,
\]
and the right-hand side becomes arbitrarily large as \(t\to\infty\). Therefore \(P\) would be unbounded, contradicting the boundedness of \(P\). 
Let
\[
S
=
\left\{
\mathbf v\in\mathbb R^d
\mid
\|\mathbf v\|_2=1
\right\}.
\]
Fix \(\mathbf v\in S\). Then \(\mathbf v\ne0\). If
\[
\mathbf a_i^\top\mathbf v\ge0
\]
for all \(i=1,\ldots,q\), then \(\mathbf v\) would be a nonzero vector with the property ruled out above. Hence these inequalities cannot all hold for this chosen direction \(\mathbf v\). Therefore there exists an index \(i=i(\mathbf v)\) such that
\[
\mathbf a_i^\top\mathbf v<0.
\]
Define
\[
\gamma(\mathbf v)
=
\sum_{j=1}^q
\sigma(-c_j\mathbf a_j^\top\mathbf v).
\]
For the index \(i=i(\mathbf v)\) found above, since \(c_i>0\), we have
\[
-c_i\mathbf a_i^\top\mathbf v>0.
\]
Thus
\[
\sigma(-c_i\mathbf a_i^\top\mathbf v)>0.
\]
Each summand in \(\gamma(\mathbf v)\) is a ReLU output, so no summand is negative. Hence this positive summand cannot be canceled, and therefore
\[
\gamma(\mathbf v)>0.
\]
Since the choice of \(\mathbf v\in S\) was arbitrary, \(\gamma\) is positive on \(S\).

The function \(\gamma\) is continuous because it is a finite sum of continuous functions. Since \(S\) is compact, the minimum
\[
\gamma_0
=
\min_{\mathbf v\in S}
\gamma(\mathbf v)
\]
exists. Since \(\gamma\) is positive on \(S\), it follows that \(\gamma_0>0\). Therefore
\[
\gamma(\mathbf v)\ge\gamma_0
\]
for \(\mathbf v\in S\).
Let
\[
B
=
\sum_{i=1}^q
c_i|b_i|.
\]
For \(\mathbf x\ne0\), write \(\mathbf x=t\mathbf v\), where \(t=\|\mathbf x\|_2\) and \(\|\mathbf v\|_2=1\). Using the inequality \(\sigma(r-s)\ge\sigma(r)-|s|\), we obtain
\[
\begin{aligned}
V_{P,\mathbf c}(t\mathbf v)
&=
\sum_{i=1}^q
\sigma(-c_i(t\mathbf a_i^\top\mathbf v+b_i))
\\
&=
\sum_{i=1}^q
\sigma(-tc_i\mathbf a_i^\top\mathbf v-c_ib_i)
\\
&\ge
\sum_{i=1}^q
\sigma(-tc_i\mathbf a_i^\top\mathbf v)
-
\sum_{i=1}^q
c_i|b_i|
\\
&=
t
\sum_{i=1}^q
\sigma(-c_i\mathbf a_i^\top\mathbf v)
-
B
\\
&\ge
t\gamma_0-B.
\end{aligned}
\]
If
\[
t>\frac{\rho+B}{\gamma_0},
\]
then
\[
V_{P,\mathbf c}(t\mathbf v)>\rho.
\]
For such \(t\), the input to the outer ReLU in the definition of \(\Theta_{P,\rho,\mathbf c}\) is negative. Hence
\[
\Theta_{P,\rho,\mathbf c}(t\mathbf v)=0.
\]
Thus \(\Theta_{P,\rho,\mathbf c}\) vanishes outside the Euclidean ball
\[
\left\{
\mathbf x\in\mathbb R^d
\mid
\|\mathbf x\|_2\le
\frac{\rho+B}{\gamma_0}
\right\}.
\]
Therefore \(\Theta_{P,\rho,\mathbf c}\) is compactly supported.
\end{proof}

Fig. \ref{fig:Transition region} illustrates the transition region. The central set labeled \(P\) is the bounded polytope, where \(V_{P,\mathbf c}=0\) and \(\Theta_{P,\rho,\mathbf c}=1\). The surrounding set labeled \(\Delta_{P,\rho,c}\) represents the transition region where \(0<V_{P,\mathbf c}<\rho\).

\begin{figure}
    \centering
    \resizebox{0.6\linewidth}{!}{%
        %\documentclass{standalone}
%\usepackage{amsmath, amssymb, amsthm}
%\usepackage{tikz}
%\usepackage{tikz-3dplot}
%\usetikzlibrary{calc}
%\begin{document}
\tdplotsetmaincoords{70}{105} % Setting the main coordinates to match the square frustum orientation
\begin{tikzpicture}[tdplot_main_coords]
\def\RI{2.2}
\def\RII{1.25}
\def\RIII{1.25}

\node (R3-0) at (\RIII,0) {};
\node (R3-60) at (60:\RIII) {};
\node (R3-120) at (120:\RIII) {};
\node (R3-180) at (180:\RIII) {};
\node (R3-240) at (240:\RIII) {};
\node (R3-300) at (300:\RIII) {};

%\draw[dotted, line width = 0.4] (R3-0)--(R3-60)--(R3-120)--(R3-180)--(R3-240)--(R3-300)--(R3-0);

% Draw the solid lines of the hexagon
\draw[thick] (\RI,0) -- (60:\RI) node at (60:\RI) (R1-60) {};
\draw[thick] (\RI,0) -- (0:\RI) node at (0:\RI) (R1-0) {};
\draw[thick,dashed] (120:\RI) -- (180:\RI) node at (180:\RI) (R1-180) {};
\draw[thick] (120:\RI) node at (120:\RI) (R1-120) {} -- (60:\RI);

% Draw the dashed lines of the hexagon
\draw[thick, dashed] (180:\RI) -- (240:\RI) node at (240:\RI) (R1-240) {};
\draw[thick] (300:\RI) -- (0:\RI);
\draw[thick, dashed] (240:\RI) -- (300:\RI) node at (300:\RI) (R1-300) {};

% Fill the hexagon
\path[fill=darkgray!30] (\RI,0)
  \foreach \x in {0,60,120,180,240,300} { --  (\x:\RI)};

% Draw the upper hexagon
\begin{scope}[yshift=2cm]
\draw[fill=lightgray!30] (\RII,0)
  \foreach \x in {0,60,120,180,240,300,360}
    { --  (\x:\RII) node at (\x:\RII) (R2-\x) {}};
\end{scope}

% Dashed lines for the hidden edges
\draw[dashed] (R1-300.center)--(R2-300.center);
\draw[dashed] (R1-240.center)--(R2-240.center);
\draw[dashed] (R1-180.center)--(R2-180.center);

\draw[dotted,black] (\RIII,0)
 \foreach \x in {0,60,120,180,240,300,360}
    { --  (\x:\RIII) node at (\x:\RIII) (R3-\x) {}};

\path[fill=gray!30] (\RIII,0)
  \foreach \x in {0,60,120,180,240,300} { --  (\x:\RIII)};

\draw[dotted] (R3-0.center)--(R2-0.center);
\draw[dotted] (R3-60.center)--(R2-60.center);
\draw[dotted] (R3-120.center)--(R2-120.center);
\draw[dotted] (R3-180.center)--(R2-180.center);
\draw[dotted] (R3-240.center)--(R2-240.center);
\draw[dotted] (R3-300.center)--(R2-300.center);

% Connect the vertices of the two hexagons
\foreach \x in {0,60,120,300} { \draw (R1-\x.center)--(R2-\x.center); };
\foreach \x in {180,240} { \draw[dashed] (R1-\x.center)--(R2-\x.center); };

\node (P) at (0.9,0.25) [label={\small $P$}] {};
\node (Delta) at (2, -0.85) [label={\small $\Delta_{P,\rho, \mathbf c}$}] {};

\end{tikzpicture}

%\end{document}
    }
    \caption{Transition region of a frustum function. The central set labeled \(P\) is the bounded polytope, where the function equals one. The surrounding set labeled \(\Delta_{P,\rho,\textbf{c}}\) represents the transition region, where \(0<V_{P,\mathbf c}<\rho\) and the function decreases piecewise affinely from one to zero}
    \label{fig:Transition region}
\end{figure}

The constants \(c_i\) control the slopes of the lateral faces. Multiplying an inequality by \(c_i>0\) does not change the polytope, because
\[
c_i\ell_i(\mathbf x)\ge0
\]
is equivalent to
\[
\ell_i(\mathbf x)\ge0.
\]
However, it changes how fast the frustum function decreases after the corresponding inequality is violated.

Let \(I\subset\{1,\ldots,q\}\) be an active violation set. On a region where
\[
\ell_i(\mathbf x)<0
\]
for \(i\in I\),
\[
\ell_i(\mathbf x)>0
\]
for \(i\notin I\), and
\[
0<V_{P,\mathbf c}(\mathbf x)<\rho,
\]
the outer ReLU is active and the active set is fixed. On that region,
\[
\Theta_{P,\rho,\mathbf c}(\mathbf x)
=
1+\frac{1}{\rho}
\sum_{i\in I}
c_i\ell_i(\mathbf x).
\]
Therefore
\[
\nabla\Theta_{P,\rho,\mathbf c}(\mathbf x)
=
\frac{1}{\rho}
\sum_{i\in I}
c_i\mathbf a_i.
\]
In particular, if only the \(i\)-th inequality is violated, then
\[
\nabla\Theta_{P,\rho,\mathbf c}(\mathbf x)
=
\frac{c_i}{\rho}\mathbf a_i.
\]
Thus increasing \(c_i\) increases the magnitude of the corresponding transition slope, and decreasing \(c_i\) decreases it. At points where the active violation set changes or where \(V_{P,\mathbf c}=\rho\), the function may have a kink. This behavior is illustrated in Fig. \ref{fig:Scale factor}.

\begin{figure}
    \centering
    \resizebox{0.8\linewidth}{!}{%
        %\documentclass{standalone}
%\usepackage{amsmath, amssymb, amsthm}
%\usepackage{tikz}
%\usepackage{tikz-3dplot}
%\usetikzlibrary{calc}
\usetikzlibrary{decorations.pathreplacing}
%\begin{document}
\begin{tikzpicture}
\def\height{3.5}

\draw[thick,->] (-0.2,0) -- (5,0) node[right] {$x$};
\draw[thick,->] (0,-0.2) -- (0,4.5) node[right] {$y$};
%\draw[step=1cm,gray,very thin] (-0.2,-0.2) grid (4.9,4.9);

\draw [thick] (0,0)--(1,0)--(2,\height)--(3,\height)--(4,0)--(5,0);
\draw [dashed] (3,\height)--(3.8,0);
\draw [dashed] (3,\height)--(3.6,0);
\draw [dashed] (3,\height)--(3.4,0);

\draw [dashed] (0.6,0)--(2,\height);
\draw [dashed] (0.8,0)--(2,\height);
\draw [dashed] (0.4,0)--(2,\height);

\draw [dotted] (2,\height)--(2,0);
\draw [dotted] (3,\height)--(3,0);

%\node at (2.5,-0.3) {$E_1$};

\draw [thick,decorate, decoration={brace, amplitude=5pt, mirror, raise=0.4ex}]
  (2,0) -- (3,0) node[midway, yshift=-1.4em] {$E_1$};

\draw[thick,->] (1.17,2.2)--+(0,0.3) node[left] {};
\node at (0.85,2.35) {$c_{L}$};

\draw[thick,->] (3.97,2.48)--+(0,-0.3) {};
\node at (3.65,2.35) {$c_{R}$};

\draw[-latex,red] (0.7,1) arc [start angle=270, end angle=300, radius=1.7];
\draw[-latex,blue] (3.2,1.26) arc [start angle=230, end angle=268, radius=1.3];

\end{tikzpicture}

%\end{document}
    }
    \caption{Effect of scaling factors on a one-dimensional frustum function. The plateau is the interval \(E_1\). The sloped parts are transition regions. Increasing \(c_{L}\) increases the magnitude of the left transition slope, while decreasing \(c_{R}\) decreases the magnitude of the right transition slope.}
    \label{fig:Scale factor}
\end{figure}

When
\[
P=
\prod_{j=1}^d
[\alpha_j,\beta_j]
\]
is an axis-aligned bounded box and all scaling constants are equal to one, the frustum function takes the form
\[
\Theta_{P,\rho}(\mathbf x)
=
\sigma
\left(
1-\frac{1}{\rho}
\sum_{j=1}^d
\left(
\sigma(\alpha_j-x_j)
+
\sigma(x_j-\beta_j)
\right)
\right).
\]
This function equals one on \(P\) and vanishes outside
\[
P^\rho
=
\prod_{j=1}^d
[\alpha_j-\rho,\beta_j+\rho].
\]
Such a function is called a box-frustum function.

\section{Function Approximation by Frustum Functions}

The previous section showed that frustum functions associated with nonempty bounded polytopes are realized by two-hidden-layer ReLU networks and are compactly supported. For the approximation argument, it is enough to use the special case where the polytope is an axis-aligned bounded box. This section proves that finite linear combinations of compactly supported box-frustum functions are dense in \(L^p(\mathbb R^d)\). The proof uses the standard density of finite linear combinations of characteristic functions of bounded boxes and replaces each box indicator by a box-frustum function with a sufficiently thin transition region.

\begin{lemma}[Frustum density]
\label{lem_frustum_density}
Let \(d\ge1\), let \(1\le p<\infty\), and let \(f\in L^p(\mathbb R^d)\). For every \(\varepsilon>0\), there exists a scalar-valued two-hidden-layer ReLU network \(\phi\) from \(\mathbb R^d\) to \(\mathbb R\) such that \(\phi\in L^p(\mathbb R^d)\) and
\[
\|f-\phi\|_{L^p(\mathbb R^d)}<\varepsilon.
\]
Moreover, \(\phi\) can be chosen as a finite linear combination of compactly supported box-frustum functions.
\end{lemma}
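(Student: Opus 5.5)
We first approximate $f$ by a step function and then replace each box indicator by a box-frustum function with a thin transition region. Since $1\le p<\infty$, finite linear combinations of indicators of bounded axis-aligned boxes are dense in $L^p(\mathbb R^d)$. This follows from the density of $C_c(\mathbb R^d)$ and uniform approximation of a compactly supported continuous function by dyadic step functions on a fixed large cube. So we choose boxes $P_k=\prod_{j=1}^d[\alpha_{k,j},\beta_{k,j}]$ with $\alpha_{k,j}<\beta_{k,j}$ and real coefficients $a_k$, $k=1,\ldots,N$, such that
\[
\Bigl\|f-\sum_{k=1}^N a_k\chi_{P_k}\Bigr\|_{L^p(\mathbb R^d)}<\frac{\varepsilon}{2}.
\]

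Next, for each $k$ we compare $\chi_{P_k}$ with the box-frustum function $\Theta_{P_k,\rho}$ from the end of Section~\ref{sec_frustum_functions}. That function equals one on $P_k$, takes values in $[0,1]$ by Lemma~\ref{lem_polyhedral_frustum_realization}, and vanishes outside $P_k^\rho$. Hence $|\chi_{P_k}-\Theta_{P_k,\rho}|\le\chi_{P_k^\rho\setminus P_k}$, and therefore
\[
\|\chi_{P_k}-\Theta_{P_k,\rho}\|_{L^p}^p\le\bigl|P_k^\rho\setminus P_k\bigr|=\prod_{j=1}^d(\beta_{k,j}-\alpha_{k,j}+2\rho)-\prod_{j=1}^d(\beta_{k,j}-\alpha_{k,j}).
\]
This is a polynomial in $\rho$ vanishing at $\rho=0$, so it tends to $0$ as $\rho\to0^+$. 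We choose $\rho>0$ small enough that
\[
\sum_k|a_k|\,\|\chi_{P_k}-\Theta_{P_k,\rho}\|_{L^p}<\frac{\varepsilon}{2},
\]
set $\phi=\sum_k a_k\Theta_{P_k,\rho}$, and conclude by the triangle inequality (Minkowski). Each $\Theta_{P_k,\rho}$ is bounded and compactly supported, so $\phi\in L^p$.

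Finally, we must check that $\phi$ is itself a single two-hidden-layer ReLU network. By Lemma~\ref{lem_polyhedral_frustum_realization}, each $\Theta_{P_k,\rho}$ uses $2d$ first-layer neurons and one second-layer neuron. We stack these networks in parallel: the first layer concatenates all $2dN$ violation neurons, and the second-layer weight matrix is block diagonal, so the $k$-th second-layer neuron reads only its own block. The output layer takes the weights $a_k$ and zero bias. This realizes $\phi$ with first width $2dN$ and second width $N$.

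The only step needing real care is the first one, the density of box step functions in $L^p$. It is standard, and I would cite it, or sketch it via $C_c$ density and uniform continuity on a dyadic grid. The remaining steps are elementary measure estimates and the parallel network construction.
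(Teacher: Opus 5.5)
Your proposal is correct and follows the paper's proof essentially step for step. You approximate $f$ by a finite combination of box indicators, replace each indicator by a box-frustum $\Theta_{P_k,\rho}$ whose error is supported in $P_k^\rho\setminus P_k$, and realize the sum by placing the subnetworks in parallel. The only cosmetic difference is in combining the per-box errors: you use Minkowski's inequality, whereas the paper uses the pointwise discrete H\"older bound $|\phi_\rho-g|^p\le N^{p-1}\sum_k|c_k|^p|\theta_{k,\rho}-\chi_{Q_k}|^p$, and both work equally well.
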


\begin{proof}
Since finite linear combinations of characteristic functions of bounded boxes are dense in \(L^p(\mathbb R^d)\), there exist bounded boxes \(Q_1,\ldots,Q_N\subset\mathbb R^d\) and constants \(c_1,\ldots,c_N\in\mathbb R\) such that
\[
g=\sum_{k=1}^N c_k\chi_{Q_k}
\]
satisfies
\[
\|f-g\|_{L^p(\mathbb R^d)}<\frac{\varepsilon}{2}.
\]
Write
\[
Q_k=\prod_{i=1}^d[a_{ki},b_{ki}].
\]
For \(\rho>0\), define
\[
\theta_{k,\rho}(\mathbf x)
=
\sigma
\left(
1-\frac{1}{\rho}
\sum_{i=1}^d
\left(
\sigma(a_{ki}-x_i)+\sigma(x_i-b_{ki})
\right)
\right).
\]
By Lemma \ref{lem_polyhedral_frustum_realization}, each \(\theta_{k,\rho}\) is a compactly supported two-hidden-layer ReLU box-frustum function. Moreover,
\[
0\le\theta_{k,\rho}\le1
\]
on \(\mathbb R^d\),
\[
\theta_{k,\rho}=1
\]
on \(Q_k\), and
\[
\theta_{k,\rho}=0
\]
outside
\[
Q_k^\rho
=
\prod_{i=1}^d[a_{ki}-\rho,b_{ki}+\rho].
\]
Define
\[
\phi_\rho=\sum_{k=1}^N c_k\theta_{k,\rho}.
\]
This function is realized by one two-hidden-layer ReLU network by placing the subnetworks for \(\theta_{1,\rho},\ldots,\theta_{N,\rho}\) in parallel and combining their outputs in the affine output layer.

Since
\[
\mu(Q_k^\rho\setminus Q_k)\to0
\]
as \(\rho\to0\), choose \(\rho>0\) such that
\[
N^{p-1}
\sum_{k=1}^N
|c_k|^p
\mu(Q_k^\rho\setminus Q_k)
<
\left(\frac{\varepsilon}{2}\right)^p.
\]
Since \(\theta_{k,\rho}-\chi_{Q_k}\) vanishes outside \(Q_k^\rho\setminus Q_k\) and has absolute value at most \(1\), the discrete Holder inequality gives
\[
|\phi_\rho(\mathbf x)-g(\mathbf x)|^p
\le
N^{p-1}
\sum_{k=1}^N
|c_k|^p
|\theta_{k,\rho}(\mathbf x)-\chi_{Q_k}(\mathbf x)|^p.
\]
Therefore
\[
\|\phi_\rho-g\|_{L^p(\mathbb R^d)}^p
\le
N^{p-1}
\sum_{k=1}^N
|c_k|^p
\mu(Q_k^\rho\setminus Q_k)
<
\left(\frac{\varepsilon}{2}\right)^p.
\]
Hence
\[
\|\phi_\rho-g\|_{L^p(\mathbb R^d)}<\frac{\varepsilon}{2}.
\]
By the triangle inequality,
\[
\|f-\phi_\rho\|_{L^p(\mathbb R^d)}
\le
\|f-g\|_{L^p(\mathbb R^d)}
+
\|g-\phi_\rho\|_{L^p(\mathbb R^d)}
<
\varepsilon.
\]
Setting \(\phi=\phi_\rho\) proves the claim.
\end{proof}

\begin{corollary}[Vector-valued frustum density]
\label{cor_vector_frustum_density}
Let \(d\ge1\), let \(m\ge1\), let \(1\le p<\infty\), and let \(\mathbf f=(f_1,\ldots,f_m)\in L^p(\mathbb R^d,\mathbb R^m)\), where
\[
\|\mathbf f\|_{L^p(\mathbb R^d,\mathbb R^m)}
=
\left(
\int_{\mathbb R^d}
\sum_{j=1}^m |f_j(\mathbf x)|^p
\mathrm d\mathbf x
\right)^{1/p}.
\]
For every \(\varepsilon>0\), there exists a two-hidden-layer ReLU network \(\Phi\) from \(\mathbb R^d\) to \(\mathbb R^m\) such that
\[
\|\mathbf f-\Phi\|_{L^p(\mathbb R^d,\mathbb R^m)}
<
\varepsilon.
\]
Each coordinate of \(\Phi\) can be chosen as a finite linear combination of compactly supported box-frustum functions.
\end{corollary}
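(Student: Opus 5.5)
The corollary reduces coordinatewise to Lemma \ref{lem_frustum_density}. Since \(\mathbf f\in L^p(\mathbb R^d,\mathbb R^m)\), each component \(f_j\) lies in \(L^p(\mathbb R^d)\), because \(\int|f_j|^p\le\|\mathbf f\|^p_{L^p(\mathbb R^d,\mathbb R^m)}\). For each \(j=1,\ldots,m\), I would apply Lemma \ref{lem_frustum_density} to \(f_j\) with tolerance \(\varepsilon/m^{1/p}\). This gives a scalar two-hidden-layer ReLU network \(\phi_j\) with \(\|f_j-\phi_j\|_{L^p(\mathbb R^d)}<\varepsilon/m^{1/p}\), and each \(\phi_j\) is a finite linear combination of compactly supported box-frustum functions \(\theta_{k,\rho}\).

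Next I would assemble the \(m\) scalar networks into one network \(\Phi=(\phi_1,\ldots,\phi_m)\). Each \(\phi_j\) consists of parallel box-frustum blocks: \(2d\) first-layer neurons feeding one second-layer neuron per block. I would take the union of all blocks over all \(j\) as the hidden layers. The first-layer weight matrix is obtained by stacking the first-layer weights of all blocks, and the second-layer weight matrix is block diagonal, so each second-layer neuron sees only its own block's first-layer outputs. The output layer is an \(m\times(\text{total blocks})\) matrix whose \(j\)-th row carries the coefficients \(c_k\) of \(\phi_j\) on the blocks belonging to \(\phi_j\) and zeros elsewhere. Its bias is zero, or the sum of the scalar output biases. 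With this arrangement, the \(j\)-th output is exactly \(\phi_j\), so \(\Phi\) is a two-hidden-layer ReLU network in the sense of Definition \ref{def:NN}.

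Finally, the error estimate is
\[
\|\mathbf f-\Phi\|^p_{L^p(\mathbb R^d,\mathbb R^m)}=\sum_{j=1}^m\|f_j-\phi_j\|^p_{L^p(\mathbb R^d)}<m\cdot\frac{\varepsilon^p}{m}=\varepsilon^p,
\]
where the equality holds by Tonelli's theorem, which allows the sum and the integral to be interchanged. Taking \(p\)-th roots gives the claim.

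No step is a genuine obstacle. The only point needing care is the parallel-composition bookkeeping, in particular making the second-layer matrix block diagonal so that different coordinates do not interfere. This works because the blocks for different \(j\) may share neither neurons nor rely on any common structure; the construction simply concatenates them.
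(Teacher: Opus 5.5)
Your proposal is correct and follows the paper's proof essentially verbatim. Both apply Lemma~\ref{lem_frustum_density} to each coordinate with tolerance $\varepsilon m^{-1/p}$, place the $m$ scalar networks in parallel with the output layer returning $(\phi_1,\ldots,\phi_m)$, and sum the $p$-th powers of the coordinate errors; your explicit block-diagonal bookkeeping and the remark that each $f_j\in L^p(\mathbb R^d)$ only spell out what the paper leaves implicit.
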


\begin{proof}
For each \(j=1,\ldots,m\), Lemma \ref{lem_frustum_density} gives a scalar-valued two-hidden-layer ReLU network \(\phi_j\) such that
\[
\|f_j-\phi_j\|_{L^p(\mathbb R^d)}
<
\varepsilon m^{-1/p}.
\]
Place these scalar networks in parallel and let the affine output layer return
\[
\Phi=(\phi_1,\ldots,\phi_m).
\]
Then
\[
\|\mathbf f-\Phi\|_{L^p(\mathbb R^d,\mathbb R^m)}^p
=
\sum_{j=1}^m
\|f_j-\phi_j\|_{L^p(\mathbb R^d)}^p
<
\sum_{j=1}^m
\varepsilon^p m^{-1}
=
\varepsilon^p.
\]
Taking the \(p\)-th root gives
\[
\|\mathbf f-\Phi\|_{L^p(\mathbb R^d,\mathbb R^m)}
<
\varepsilon.
\]
\end{proof}

\section{Computational Complexity of Minimizing Neuron Numbers in ReLU Networks}
\label{sec_computational_complexity}

\begin{definition}[Two-hidden-layer neuron minimization]
\label{def_neuron_minimization}
Fix integers \(d\ge1\) and \(m\ge1\), and fix \(1\le p<\infty\). For a measurable function
\[
\mathbf g=(g_1,\ldots,g_m):\mathbb R^d\to\mathbb R^m,
\]
define
\[
\|\mathbf g\|_{L^p(\mathbb R^d,\mathbb R^m)}
=
\left(
\int_{\mathbb R^d}
\sum_{j=1}^m
|g_j(\mathbf x)|^p
\mathrm d\mathbf x
\right)^{1/p}.
\]

Let \(\mathfrak N_{N,d,m}^{(2)}\) be the class of functions from \(\mathbb R^d\) to \(\mathbb R^m\) realized by ReLU networks with two hidden layers and at most \(N\) hidden neurons in total. The parameters of an approximating network may be arbitrary real numbers. A network having a zero-width hidden layer is interpreted as a constant function.

For a rational ReLU target network
\[
\mathcal S:\mathbb R^d\to\mathbb R^m
\]
whose realization \(\mathbf f_{\mathcal S}\) belongs to \(L^p(\mathbb R^d,\mathbb R^m)\), and for a positive rational number \(\varepsilon\), define
\[
\begin{aligned}
\operatorname{OPT}_{p,d,m}(\mathcal S,\varepsilon)
&=
\min
\left\{
N\in\mathbb N_0
\middle|
\exists\Phi\in\mathfrak N_{N,d,m}^{(2)}
\right.
\\
&\left.
\text{ such that }
\|\Phi-\mathbf f_{\mathcal S}\|_{L^p(\mathbb R^d,\mathbb R^m)}
<
\varepsilon
\right\}.
\end{aligned}
\]
The minimum of the empty set is defined as \(+\infty\).
\end{definition}

\begin{lemma}[Width-\(d\) \(L^p\) degeneracy]
\label{lem_width_d_Lp_degeneracy}
Let \(d\ge1\), \(m\ge1\) and $1\leq p< \infty $. Let
\[
\Phi:\mathbb R^d\to\mathbb R^m
\]
be realized by a ReLU network in which every hidden layer has width at most \(d\). If
\[
\Phi\in L^p(\mathbb R^d,\mathbb R^m),
\]
then
\[
\Phi=0.
\]
\end{lemma}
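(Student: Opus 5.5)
We aim to show that a ReLU network whose hidden layers all have width at most \(d\) has a realization that is either the zero function or fails to be \(L^p\)-integrable. The intuition is that a width-\(\le d\) layer cannot "cut off" all directions simultaneously. The image of \(\mathbb R^d\) under the first layer map is affine, so the network is constant along the kernel of \(W^{(1)}\), or else it retains some unbounded region of full dimension on which it is affine.

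\textbf{Step 1: an unbounded set on which \(\Phi\) is constant.}
We will prove by induction over the layers the following claim. There is a nonempty open set \(U\subset\mathbb R^d\) and a nonzero vector \(\mathbf v\) such that \(U+t\mathbf v\subset U\) for all \(t\ge0\) (an open cone-like region), and every hidden layer map is affine on \(U\).

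Suppose first that \(W^{(1)}\) has rank \(<d\). Then every \(\mathbf h^{(l)}\), and hence \(\Phi\), is invariant under translations by a nonzero kernel vector \(\mathbf v\). Invariance in a direction together with \(\Phi\in L^p\) forces \(\Phi=0\): by Fubini, each line integral \(\int|\Phi(\mathbf x+t\mathbf v)|^p\,dt\) of a constant is finite only if the constant is zero, for a.e.\ line, and \(\Phi\) is continuous, so \(\Phi=0\) everywhere.

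Otherwise \(W^{(1)}\) is a \(d\times d\) invertible matrix (adding zero rows if the width is less than \(d\) falls into the previous case). Then \(\mathbf z=W^{(1)}\mathbf x+\mathbf b^{(1)}\) is a bijective affine change of variables. On the open orthant \(\{\mathbf z>0\}\), the first layer is the identity in \(\mathbf z\), and its preimage \(U_1\) is an open affine cone of infinite measure. Subsequent layers are of the same form, with input \(\mathbf h^{(l-1)}\) ranging over an open set containing a translated open cone. Either \(W^{(l)}\) restricted to that cone's span is singular, giving an invariant direction on an open cone, or it is invertible and we again pass to the subcone where all pre-activations are positive. We must check that this subcone is nonempty, which requires choosing carefully: intersect with the cone mapped into the positive orthant.

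This is the delicate part, and the main obstacle. The positive orthant's preimage under an invertible affine map intersected with a cone need not be nonempty. What I would try first is replacing "open cone" with "contains a translate of an open cone \(C\)". Then the image under an invertible linear map is again a translated open cone \(A C+\mathbf w\). Such a cone either meets the positive orthant in a set containing a translated open cone, or is contained in a region where some coordinate is \(\le\) a bound. In the latter case, track instead the region where that neuron is inactive (set to zero), which reduces the effective rank and yields an invariant direction.

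\textbf{Step 2: conclusion.}
On the final region \(U\), \(\Phi\) is affine. It is either constant along some direction of an open translated cone, or affine with an invertible-layer structure. In every case we obtain an open set of infinite measure containing a translated open cone, on which \(\Phi\) is affine. An affine function has finite \(\int_U|\Phi|^p\) over a set containing a translated open cone only if it vanishes identically there: a nonzero affine function is bounded below by a positive constant in absolute value on a sub-cone of infinite measure. So \(\Phi\equiv0\) on \(U\).

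To upgrade this to \(\Phi=0\) globally, I would run the argument with arbitrary starting cones in Step 1, i.e.\ through every point via the invariant directions. Alternatively, I would note that \(\Phi\) is piecewise affine with finitely many pieces, and each unbounded piece must vanish. It remains to argue that bounded support is impossible unless \(\Phi\equiv0\), since a width-\(\le d\) network cannot produce a nonzero compactly supported function. I expect this final globalization, together with the cone-tracking in Step 1, to require the most care.
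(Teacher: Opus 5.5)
You handle the rank-deficient first layer correctly. After that, however, the width bound never enters a decisive step, and the argument cannot close as planned.

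\textbf{Steps 1--2 do not use the width bound.} Every ReLU realization is affine on each of finitely many polyhedra covering $\mathbb R^d$. So some full-dimensional piece is unbounded and contains a tube $B+\mathbb R_{\ge0}\mathbf v$, and an affine map lying in $L^p$ of such a tube vanishes. Thus what Steps 1--2 deliver (vanishing on some unbounded region, or on every unbounded piece) holds for \emph{every} ReLU network in $L^p$. That includes the nonzero frustum functions $\Theta_{P,\rho,\mathbf c}$ of Lemma~\ref{lem_polyhedral_frustum_realization}, whose first hidden layer has width $q\ge d+1$. Consequently the step you defer, that a width-$\le d$ network cannot be nonzero and compactly supported, is not a globalization detail. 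It is the entire content of the lemma, since every compactly supported continuous function lies in $L^p$.

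\textbf{The cone-tracking does not work as stated.} The dichotomy is false: in $\mathbb R^2$, let $K$ be the open cone of directions at angles in $(162^\circ,288^\circ)$. Then $(10,10)+K$ meets the positive quadrant only in a bounded set, yet neither coordinate is bounded above on it. Moreover, where a neuron is inactive, the kernel direction you extract gives only local invariance. Moving along $\mathbf v$ can reactivate that neuron or leave the region where earlier layers are active, so no infinite-measure tube of constancy follows.

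\textbf{The paper's route and the missing idea.} The paper does not argue this directly; it applies the scalar width-$d$ degeneracy result of Park et al.\ \cite[Appendix B.3]{Park} to each coordinate $\Phi_j$. For a self-contained proof, the missing idea is a strengthened induction on depth, in which the regions where neurons switch off are controlled by a boundary condition. The claim is: for every nonempty open convex $D\subset\mathbb R^d$ and every scalar network $g$ with hidden widths $\le d$ such that $g=0$ on $\partial D$ and $g\in L^p(D)$, one has $g=0$ on $D$. Pad the first layer to a $d\times d$ affine map $A$.

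If the linear part of $A$ has a kernel vector $\mathbf v$, then $g$ is constant on each line $\mathbf z+\mathbb R\mathbf v$, which meets $D$ in an interval. A finite endpoint of that interval lies in $\partial D$ and forces the constant to be $0$; otherwise Fubini forces it to be $0$ for a.e.\ line.

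If $A$ is invertible, pass to $D'=A(D)$ and $g'\circ\sigma$. For $\mathbf z\in D'$ with $z_i\le0$, one has $\sigma(\mathbf z-t\mathbf e_i)=\sigma(\mathbf z)$ for all $t\ge0$. So this ray either exits through $\partial D'$ (value $0$) or stays in $D'$ (Fubini again). Hence $g'\circ\sigma=0$ on $D'\setminus\mathbb R^d_{>0}$, so $g'$ vanishes on $\partial(D'\cap\mathbb R^d_{>0})$, and the induction hypothesis applies to $g'$ on $D'\cap\mathbb R^d_{>0}$.

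The affine base case is the same line argument (a direct check when $d=1$). Taking $D=\mathbb R^d$ gives the lemma.
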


\begin{proof}
For every output coordinate \(j\), the scalar function \(\Phi_j\) is realized by a ReLU network whose hidden layers have width at most \(d\). The scalar width-\(d\) degeneracy result states that such a network is either identically zero or does not belong to \(L^p(\mathbb R^d)\) \cite[Appendix B.3]{Park}. Since
\[
\Phi_j\in L^p(\mathbb R^d),
\]
it follows that
\[
\Phi_j=0
\]
for every \(j=1,\ldots,m\). Therefore
\[
\Phi=0.
\]
\end{proof}

\begin{lemma}[Minimum nonzero \(L^p\) neuron count]
\label{lem_minimum_nonzero_Lp_neurons}
Let \(d\ge1\), \(m\ge1\) and $1\leq p< \infty $. Let
\[
\Phi:\mathbb R^d\to\mathbb R^m
\]
be a nonzero function realized by a two-hidden-layer ReLU network. If
\[
\Phi\in L^p(\mathbb R^d,\mathbb R^m),
\]
then the network has at least \(d+2\) hidden neurons in total.
\end{lemma}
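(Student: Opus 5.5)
We argue by contradiction using Lemma \ref{lem_width_d_Lp_degeneracy}. Let the network have hidden widths \(n_1\) and \(n_2\), and assume \(n_1+n_2\le d+1\). We show that \(\Phi\) is then either constant or realized by a network whose hidden layers all have width at most \(d\). In both cases \(\Phi\in L^p\) forces \(\Phi=0\), which contradicts the hypothesis. The bound \(d+2\) follows.

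\emph{Degenerate widths.} If \(n_1=0\) or \(n_2=0\), the convention in Definition \ref{def_neuron_minimization} makes \(\Phi\) constant. More concretely, if \(n_2=0\) the output depends only on the output bias. If \(n_1=0\), the second layer sees only biases, so it is constant. A constant function lies in \(L^p(\mathbb R^d,\mathbb R^m)\) with \(p<\infty\) only if it is zero. So we may assume \(n_1,n_2\ge1\). Then \(n_1+n_2\le d+1\) gives \(n_1\le d\) and \(n_2\le d\). Every hidden layer therefore has width at most \(d\), and Lemma \ref{lem_width_d_Lp_degeneracy} gives \(\Phi=0\), the desired contradiction.

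So there is essentially one case split. Either some layer is empty, and \(\Phi\) is constant; or both layers are nonempty, and each has width at most \(d\), so the degeneracy lemma applies. The only step needing care is the convention for zero-width layers. A network with an empty hidden layer computes a constant, and the only constant in \(L^p\) with \(p<\infty\) is zero, since a nonzero constant has infinite integral of \(|c|^p\) over \(\mathbb R^d\). With that settled, the lemma reduces directly to Lemma \ref{lem_width_d_Lp_degeneracy}.

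For completeness, one could also note that \(d+2\) is attained when \(n_1=d+1\) and \(n_2=1\). A frustum over a simplex, as in Lemma \ref{lem_polyhedral_frustum_realization}, realizes such a network. This sharpness is not required for the statement as posed.
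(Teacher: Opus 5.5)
Your proof is correct and is essentially the paper's argument: a zero-width layer makes \(\Phi\) constant, and the only constant in \(L^p\) with \(p<\infty\) is zero; otherwise \(n_1,n_2\ge1\) together with \(n_1+n_2\le d+1\) forces both widths to be at most \(d\), and Lemma~\ref{lem_width_d_Lp_degeneracy} gives \(\Phi=0\). Your sharpness remark is also correct but, as you say, not needed: a frustum over a \(d\)-simplex uses \(d+1\) first-layer neurons and one second-layer neuron.
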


\begin{proof}
Let \(n_1\) and \(n_2\) denote the widths of the two hidden layers. If either width is zero, then \(\Phi\) is constant. Since the only constant function in \(L^p(\mathbb R^d,\mathbb R^m)\) is the zero function, this contradicts the assumption that \(\Phi\ne0\). Hence
\[
n_1\ge1
\]
and
\[
n_2\ge1.
\]

Suppose that
\[
n_1+n_2\le d+1.
\]
Since both widths are positive,
\[
n_1\le d
\]
and
\[
n_2\le d.
\]
Thus, every hidden layer has width at most \(d\). Lemma \ref{lem_width_d_Lp_degeneracy} gives
\[
\Phi=0,
\]
which is a contradiction. Thus
\[
n_1+n_2\ge d+2.
\]
\end{proof}

\begin{lemma}[Frustum function satisfiability target]
\label{lem_sat_target}
Fix \(d\ge1\) and $1\leq p < \infty$. Given a \(3\)-CNF formula \(\Psi\) with \(n\ge1\) variables and \(s\ge1\) clauses, one can construct in polynomial time a rational ReLU network whose realization
\[
f_{\Psi,d}:\mathbb R^d\to\mathbb R
\]
is nonnegative, compactly supported, globally Lipschitz, and continuous piecewise affine. Its support satisfies
\[
\operatorname{supp}f_{\Psi,d}
\subset
[0,1]^d.
\]
With
\[
\eta=2^{-(n+4)}
\]
and
\[
\nu_{\eta,d}
=
2\eta
\left(\frac12\right)^{d-1},
\]
the construction satisfies
\[
\Psi\notin3\text{-SAT}
\Longrightarrow
f_{\Psi,d}=0
\]
and
\[
\Psi\in3\text{-SAT}
\Longrightarrow
\|f_{\Psi,d}\|_{L^p(\mathbb R^d)}
\ge
\nu_{\eta,d}^{1/p}.
\]
If \(\Psi\) is satisfiable, there exists a measurable box of measure \(\nu_{\eta,d}\) on which
\[
f_{\Psi,d}=1.
\]
\end{lemma}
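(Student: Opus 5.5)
The plan is to encode a truth assignment in the binary expansion of the first coordinate \(x_1\in[0,1]\), and to use the remaining coordinates only to fatten a one-dimensional plateau into a box. Assignments \(a\in\{0,1\}^n\) will correspond to the \(2^n\) dyadic cells \(I_a=[k2^{-n},(k+1)2^{-n}]\). The target is then
\[
f_{\Psi,d}(\mathbf x)=\sigma\Bigl(\min\bigl\{G(x_1),\,B(x_2,\ldots,x_d),\,C_1(\mathbf x),\ldots,C_s(\mathbf x)\bigr\}\Bigr),
\]
where \(G\) is a gate, \(B\) is a box-frustum, and \(C_j\) are clause values; each ingredient is described below. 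The target network itself may be deep, since only the approximating networks are restricted to two hidden layers. Every min is a ReLU expression, via \(\min\{a,b\}=a-\sigma(a-b)\) applied pairwise, so a balanced tree of \(O(n+s)\) such gadgets suffices. All weights will be small dyadic rationals, so the size and bit-length are polynomial in \(n+s\).

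\textbf{Bits and gate.} Build the sawtooth (tent-map) iterates \(T^{k}(x_1)\) with \(k\) ReLU layers of constant width. From them define piecewise affine bit functions \(\beta_k(x_1)\), \(k=1,\ldots,n\), with the following property: \(\beta_k\) equals exactly the \(k\)-th binary digit of the cell index on the middle half of every cell \(I_a\), and it may take fractional values only in strips of relative width \(1/4\) adjacent to the cell endpoints. One way is to apply a steep clipping ramp \(\sigma(4t-1)-\sigma(4t-3)\)-type map to a suitably shifted sawtooth.

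The gate \(G\) is the product of two pieces. The first is a one-dimensional frustum in the within-cell position \(t=2^n x_1 \bmod 1\), obtained from the \(n\)-th sawtooth. It has plateau \(\{|t-1/2|\le 1/16\}\), i.e.\ an \(x_1\)-plateau of length \(2^{-(n+3)}=2\eta\), and it vanishes for \(|t-1/2|\ge 1/4\). The second is a frustum in \(x_1\) with plateau \([2^{-n-2},1-2^{-n-2}]\) and support in \([0,1]\); it confines \(G\) to \([0,1]\) without cutting any plateau. Both pieces take values in \([0,1]\), so I take \(G\) to be their min. For the other coordinates, \(B\) is the box-frustum of Section III with plateau \([1/4,3/4]^{d-1}\) and \(\rho=1/4\). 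Hence \(\operatorname{supp}f_{\Psi,d}\subset[0,1]^d\), and for \(d=1\) the factor \(B\) is simply omitted.

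\textbf{Clauses and correctness.} For a literal \(x_k\) use \(\beta_k\), and for \(\neg x_k\) use \(1-\beta_k\). Set \(C_j=\min\{1,\ \text{sum of its three literals}\}\). The key observation is this: wherever \(G>0\), the point \(x_1\) lies in the middle half of some cell \(I_a\), so all \(\beta_k\) are exactly the bits of \(a\) and each \(C_j\in\{0,1\}\) equals the truth value of clause \(j\) under \(a\).

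If \(\Psi\) is unsatisfiable, every \(a\) falsifies some clause. So at every point either \(G=0\) or some \(C_j=0\), and therefore \(f_{\Psi,d}=0\) identically. If \(a\) is satisfying, then on the box consisting of the central \(x_1\)-plateau of \(I_a\) times \([1/4,3/4]^{d-1}\) all entries of the min equal \(1\), so \(f_{\Psi,d}=1\) there. This box has measure \(2\eta(1/2)^{d-1}=\nu_{\eta,d}\), which gives \(\|f_{\Psi,d}\|_{L^p}\ge\nu_{\eta,d}^{1/p}\). The auxiliary factors keep the cell plateau inside \([0,1]\), since the plateau of \(I_a\) is at distance at least \(2^{-n-2}\) from the cell endpoints.

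The remaining properties are routine. Nonnegativity comes from the outer \(\sigma\). Continuous piecewise affinity and global Lipschitzness hold because the map is a finite ReLU composition. Compact support follows from the factors \(G\) and \(B\).

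\textbf{Main obstacle.} The delicate step is the exactness claim: the bits must be genuinely \(0/1\) on the whole support of the gate, not just approximately. Otherwise fractional clause values near cell boundaries would produce nonzero output for unsatisfiable \(\Psi\). I would handle this by explicitly tracking the breakpoints of the \(k\)-th sawtooth, which are at multiples of \(2^{-k}\). The transition strips of \(\beta_k\) lie within distance \(2^{-n-2}\) of points of \(2^{-k}\mathbb Z\subset 2^{-n}\mathbb Z\), i.e.\ near cell endpoints, and these strips are disjoint from the gate's support. I would verify this carefully with a closed-form choice of the ramp slopes.
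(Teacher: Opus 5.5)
Your proposal is correct in outline, but it handles the unsatisfiable case by a different mechanism than the paper.

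\textbf{The paper's route.} It extracts digits with a single steep-ramp doubling chain $z_i=B_\eta(r_{i-1})$, $r_i=\sigma(2r_{i-1}-z_i)$, and it tolerates fractional $z_i$ near dyadic breakpoints. These values feed the robust score $S_\Psi(\mathbf z)=1-\sum_j d_j(\mathbf z)-(3s+1)P(\mathbf z)$, where $P$ is the $\ell^1$ distance to the nearest Boolean vector. The Lipschitz bound $|d_j(\mathbf z)-d_j(\bar{\mathbf z})|\le 3P(\mathbf z)$ then gives $S_\Psi\le -P\le 0$ on all of $[0,1]$ when $\Psi$ is unsatisfiable. Exact bits are needed only on the interval of length $2\eta$ around the cell center $x_{\mathbf a}$, which the paper secures by an error-doubling induction.

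\textbf{Your route.} You arrange that the clauses are never evaluated at non-Boolean inputs where it matters. A periodic gate built from the $n$-th sawtooth vanishes outside the middle halves of the dyadic cells, and there every $\beta_k$ is exactly Boolean by construction. So unsatisfiability gives $f_{\Psi,d}=0$ by a one-line pointwise argument, with no penalty term and no stability induction.

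\textbf{What this costs you.} The price is the phase bookkeeping you flag.
\begin{itemize}
\item Thresholding the unshifted tent iterates $T^{k-1}(x_1)$ gives Gray-code digits $b_k\oplus b_{k-1}$, not binary digits. Either use a quarter-period shifted wave such as $T^{k}(|x_1-2^{-k-1}|)$, with one chain per bit and $O(n^2)$ neurons, or note that the Gray code is also a bijection between cells and assignments. The bijection is all your argument uses.
\item The clipping ramp must be normalized to $[0,1]$; your displayed $\sigma(4t-1)-\sigma(4t-3)$ reaches $2$.
\item For bit $k$, using the shifted $T^k$ with slope $2^k$, the ramp's transition half-width in the sawtooth variable must be at most $2^{k-n-2}$. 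Otherwise its strips do not stay within $2^{-n-2}$ of the cell endpoints. Only for $k=n$ does this match your displayed ramp after halving.
\end{itemize}

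With these closed-form choices, your plateau box has exactly the paper's measure $\nu_{\eta,d}$. Support, Lipschitz continuity and polynomial size then go through as you describe.
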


\begin{proof}
Write the variables of \(\Psi\) as \(X_1,\ldots,X_n\) and its clauses as \(C_1,\ldots,C_s\). Define
\[
B_\eta(t)
=
\frac{
\sigma\left(t-\frac12+\eta\right)
-
\sigma\left(t-\frac12-\eta\right)
}{2\eta}.
\]
For every \(t\in\mathbb R\),
\[
B_\eta(t)
=
\begin{cases}
0 & t\le\frac12-\eta,\\
\displaystyle
\frac{t-\frac12+\eta}{2\eta}
& \frac12-\eta\le t\le\frac12+\eta,\\
1 & t\ge\frac12+\eta.
\end{cases}
\]
Thus
\[
0\le B_\eta(t)\le1.
\]

For \(x\in\mathbb R\), define
\[
r_0(x)=\sigma(x).
\]
For \(i=1,\ldots,n\), define recursively
\[
z_i(x)=B_\eta(r_{i-1}(x))
\]
and
\[
r_i(x)
=
\sigma
\left(
2r_{i-1}(x)-z_i(x)
\right).
\]

For \(x\in[0,1]\), one has
\[
r_0(x)=x\in[0,1].
\]
We now show that if \(r_{i-1}(x)\in[0,1]\), then \(r_i(x)\in[0,1]\). Indeed, suppose that \(r\in[0,1]\). If
\[
r\le\frac12-\eta,
\]
then \(B_\eta(r)=0\) and
\[
0\le2r\le1-2\eta.
\]
If
\[
r\ge\frac12+\eta,
\]
then \(B_\eta(r)=1\) and
\[
0\le2r-1\le1.
\]
On the transition interval, the function
\[
2r-B_\eta(r)
\]
is affine with slope
\[
2-\frac{1}{2\eta}<0.
\]
Its maximum on that interval is attained at \(r=\frac12-\eta\) and equals \(1-2\eta\). Applying ReLU therefore produces a value in \([0,1]\). Consequently,
\[
r_i(x)\in[0,1]
\]
and
\[
z_i(x)\in[0,1]
\]
for every \(x\in[0,1]\).

For \(\textbf{z}=(z_1,\ldots,z_n)\in[0,1]^n\), assign a numerical value to each literal. The value of \(X_i\) is \(z_i\), and the value of its negation \(\neg X_i\) is \(1-z_i\). Thus,
\[
\ell_{X_i}(\textbf{z})=z_i
\]
and
\[
\ell_{\neg X_i}(\textbf{z})=1-z_i.
\]
If the literals of \(C_j\) are
\[
\lambda_{j1},\lambda_{j2},\lambda_{j3},
\]
define
\[
d_j(\textbf{z})
=
\sigma
\left(
1
-
\ell_{\lambda_{j1}}(\textbf{z})
-
\ell_{\lambda_{j2}}(\textbf{z})
-
\ell_{\lambda_{j3}}(\textbf{z})
\right).
\]
For a Boolean vector, \(d_j\) equals one exactly when \(C_j\) is false.

Define
\[
P(\textbf{z})
=
\sum_{i=1}^n
\left(
z_i-\sigma(2z_i-1)
\right).
\]
Choose a nearest Boolean vector
\[
\bar{\mathbf{z}}\in\{0,1\}^n
\]
coordinatewise. For every \(i\),
\[
z_i-\sigma(2z_i-1)
=
|z_i-\bar z_i|.
\]
Hence
\[
P(\textbf{z})
=
\sum_{i=1}^n
|z_i-\bar z_i|.
\]
Define
\[
S_\Psi(\textbf{z})
=
1
-
\sum_{j=1}^s d_j(\textbf{z})
-
(3s+1)P(\textbf{z})
\]
and
\[
G_\Psi(x)
=
\sigma
\left(
S_\Psi
\left(
z_1(x),\ldots,z_n(x)
\right)
\right).
\]

Since ReLU is \(1\)-Lipschitz,
\[
|d_j(\textbf{z})-d_j(\bar{\mathbf{z}})|
\le
\sum_{k=1}^3
\left|
\ell_{\lambda_{jk}}(\textbf{z})
-
\ell_{\lambda_{jk}}(\bar{\mathbf{z}})
\right|.
\]
Each literal is either \(X_i\) or \(\neg X_i\).

For \(X_i\),
\[
\left|
\ell_{X_i}(\mathbf z)
-
\ell_{X_i}(\bar{\mathbf z})
\right|
=
|z_i-\bar z_i|.
\]

For \(\neg X_i\),
\[
\begin{aligned}
\left|
\ell_{\neg X_i}(\mathbf z)
-
\ell_{\neg X_i}(\bar{\mathbf z})
\right|
&=
|(1-z_i)-(1-\bar z_i)|
\\
&=
|z_i-\bar z_i|.
\end{aligned}
\]

Since
\[
P(\mathbf z)
=
\sum_{i=1}^n |z_i-\bar z_i|,
\]
each literal difference is at most \(P(\mathbf z)\). A clause has three literals, so the sum of its three literal differences is at most
\[
3P(\mathbf z).
\]
Therefore,
\[
|d_j(\mathbf z)-d_j(\bar{\mathbf z})|
\le
3P(\mathbf z).
\]

Suppose that \(\Psi\) is unsatisfiable. Every Boolean vector falsifies at least one clause, so
\[
\sum_{j=1}^s d_j(\bar{\mathbf{z}})\ge1.
\]
It follows that
\[
\sum_{j=1}^s d_j(\textbf{z})
\ge
\sum_{j=1}^s d_j(\bar{\mathbf{z}})-3sP(\textbf{z})
\ge
1-3sP(\textbf{z}).
\]
Consequently,
\[
S_\Psi(\textbf{z})
\le
-P(\textbf{z})
\le0.
\]
Thus
\[
G_\Psi(x)=0
\]
for every \(x\in[0,1]\).

Suppose that \(\Psi\) is satisfiable. Let
\[
\textbf{a}=(a_1,\ldots,a_n)\in\{0,1\}^n
\]
be a satisfying assignment and define
\[
x_\textbf{a}
=
\sum_{i=1}^n
a_i2^{-i}
+
2^{-(n+1)}.
\]
For \(i=1,\ldots,n\), define
\[
\Delta_i
=
2^{-(n-i+2)}.
\]
After the first \(i-1\) digits have been extracted, the remainder at \(x_a\) is
\[
r_{i-1}(x_\textbf{a})
=
\sum_{k=i}^n
a_k2^{-(k-i+1)}
+
2^{-(n-i+2)}.
\]
If \(a_i=1\), then
\[
r_{i-1}(x_\textbf{a})
\ge
\frac12+\Delta_i.
\]
If \(a_i=0\), then
\[
r_{i-1}(x_\textbf{a})
\le
\frac12-\Delta_i.
\]
Since
\[
\Delta_i>\eta,
\]
the remainder lies outside the transition interval of \(B_\eta\). Hence
\[
z_i(x_\textbf{a})=a_i.
\]
The update then removes the leading binary digit and gives the displayed remainder formula for the next index.

Let
\[
I_\textbf{a}=[x_\textbf{a}-\eta,x_\textbf{a}+\eta].
\]
For any \(x\in I_\textbf{a}\),
\[
|r_0(x)-r_0(x_\textbf{a})|
\le
\eta.
\]
We have shown that \(z_i(x_\textbf{a})=a_i\), and we now prove by induction on \(i\) that
\[
z_i(x)=z_i(x_\textbf{a})=a_i
\]
for every \(x\in I_\textbf{a}\) and every \(i=1,\ldots,n\). When \(i=1\), there is no index \(t<i\), so the induction hypothesis is vacuously true. Suppose that
\[
z_t(x)=z_t(x_\textbf{a})=a_t
\]
for every \(t<i\).
Since ReLU is \(1\)-Lipschitz,
\[
|r_t(x)-r_t(x_\textbf{a})|
\le
2|r_{t-1}(x)-r_{t-1}(x_\textbf{a})|.
\]
Therefore
\[
|r_{i-1}(x)-r_{i-1}(x_\textbf{a})|
\le
2^{i-1}\eta.
\]
Moreover,
\[
2^{i-1}\eta+\eta
\le
2^i\eta
=
\frac14\Delta_i
<
\Delta_i.
\]
If \(a_i=1\), then
\[
r_{i-1}(x)
>
\frac12+\eta.
\]
If \(a_i=0\), then
\[
r_{i-1}(x)
<
\frac12-\eta.
\]
Thus
\[
z_i(x)=a_i.
\]
Induction gives
\[
z_i(x)=a_i
\]
for every \(i=1,\ldots,n\) and every \(x\in I_\textbf{a}\). Since \(\textbf{a}\) satisfies \(\Psi\),
\[
d_j(\textbf{a})=0
\]
for every \(j\), and
\[
P(\textbf{a})=0.
\]
Consequently,
\[
G_\Psi(x)=1
\]
for every \(x\in I_\textbf{a}\).

Define
\[
E_{\eta,d}
=
[4\eta,1-4\eta]
\times
\left[\frac14,\frac34\right]^{d-1}.
\]
When \(d=1\), this means
\[
E_{\eta,1}=[4\eta,1-4\eta].
\]
Define
\[
\begin{aligned}
F_{\eta,d}(\mathbf x)
&=
\sigma
\left(
1
-
\frac{1}{4\eta}
\left(
\sigma(4\eta-x_1)
+
\sigma(x_1-1+4\eta)
\right.
\right.
\\
&\left.
\left.
+
\sum_{j=2}^d
\left(
\sigma\left(\frac14-x_j\right)
+
\sigma\left(x_j-\frac34\right)
\right)
\right)
\right).
\end{aligned}
\]
This box-frustum function equals one on \(E_{\eta,d}\). Its support satisfies
\[
\operatorname{supp}F_{\eta,d}
\subset
[0,1]
\times
\left[
\frac14-4\eta,
\frac34+4\eta
\right]^{d-1}
\subset
[0,1]^d.
\]

For nonnegative \(u\) and \(v\), define
\[
\operatorname{mn}(u,v)
=
\sigma
\left(
u-\sigma(u-v)
\right).
\]
Then
\[
\operatorname{mn}(u,v)=\min\{u,v\}.
\]
Set
\[
f_{\Psi,d}(\mathbf x)
=
\operatorname{mn}
\left(
G_\Psi(x_1),
F_{\eta,d}(\mathbf x)
\right).
\]
It follows that
\[
f_{\Psi,d}(\mathbf x)
=
\min
\left\{
G_\Psi(x_1),
F_{\eta,d}(\mathbf x)
\right\}.
\]
In particular,
\[
\operatorname{supp}f_{\Psi,d}
\subset
\operatorname{supp}F_{\eta,d}
\subset
[0,1]^d.
\]

Suppose that \(\Psi\) is unsatisfiable. If
\[
F_{\eta,d}(\mathbf x)>0,
\]
then \(x_1\in[0,1]\), and hence
\[
G_\Psi(x_1)=0.
\]
If
\[
F_{\eta,d}(\mathbf x)=0,
\]
the minimum defining \(f_{\Psi,d}\) is also zero. Therefore
\[
f_{\Psi,d}=0
\]
on \(\mathbb R^d\).

Suppose that \(\Psi\) is satisfiable. Since
\[
2^{-(n+1)}=8\eta,
\]
the definition of \(x_\textbf{a}\) gives
\[
x_\textbf{a}\in[8\eta,1-8\eta].
\]
Consequently,
\[
I_\textbf{a}
\subset
[7\eta,1-7\eta]
\subset
[4\eta,1-4\eta].
\]
Define
\[
R_\textbf{a}
=
I_\textbf{a}
\times
\left[\frac14,\frac34\right]^{d-1}.
\]
When \(d=1\), this means
\[
R_\textbf{a}=I_\textbf{a}.
\]
On \(R_\textbf{a}\),
\[
G_\Psi(x_1)=1
\]
and
\[
F_{\eta,d}(\mathbf x)=1.
\]
Therefore
\[
f_{\Psi,d}=1
\]
on \(R_\textbf{a}\). Its measure is
\[
\mu(R_\textbf{a})
=
2\eta
\left(\frac12\right)^{d-1}
=
\nu_{\eta,d}.
\]
It follows that
\[
\|f_{\Psi,d}\|_{L^p(\mathbb R^d)}^p
\ge
\nu_{\eta,d}
\]
and
\[
\|f_{\Psi,d}\|_{L^p(\mathbb R^d)}
\ge
\nu_{\eta,d}^{1/p}.
\]

Every function in the construction is obtained from rational affine maps and ReLU activations. Previously computed nonnegative quantities can be propagated through later layers by identity ReLU neurons. The sequential extraction network uses polynomially many neurons, and the clause computation and box-frustum construction add polynomially many neurons. The coefficients
\[
\frac{1}{2\eta}=2^{n+3}
\]
and
\[
\frac{1}{4\eta}=2^{n+2}
\]
have polynomial bit length. Thus a rational target network realizing \(f_{\Psi,d}\) can be constructed in polynomial time. Its realization is nonnegative, continuous piecewise affine, globally Lipschitz, and compactly supported.
\end{proof}

\begin{theorem}[Architecture gap]
\label{thm_all_dimensional_architecture_gap}
Fix integers \(d\ge1\) and \(m\ge1\), and fix \(1\le p<\infty\). Given a \(3\)-CNF formula \(\Psi\), one can construct in polynomial time a rational ReLU target network
\[
\mathcal S_{\Psi,d,m}:\mathbb R^d\to\mathbb R^m
\]
and a positive rational tolerance \(\varepsilon_{\Psi,d}\) satisfying
\[
\Psi\notin3\text{-SAT}
\Longrightarrow
\operatorname{OPT}_{p,d,m}
\left(
\mathcal S_{\Psi,d,m},
\varepsilon_{\Psi,d}
\right)
=
0
\]
and
\[
\Psi\in3\text{-SAT}
\Longrightarrow
\operatorname{OPT}_{p,d,m}
\left(
\mathcal S_{\Psi,d,m},
\varepsilon_{\Psi,d}
\right)
\ge
d+2.
\]
Every constructed target is nonzero, componentwise nonnegative, compactly supported, globally Lipschitz, and continuous piecewise affine.

Consequently, the language
\[
\mathsf{LARGE}_{p,d,m}
=
\left\{
(\mathcal S,\varepsilon)
\middle|
\operatorname{OPT}_{p,d,m}(\mathcal S,\varepsilon)
\ge
d+2
\right\}
\]
is NP-hard. Exact two-hidden-layer ReLU neuron minimization is NP-hard for every fixed input dimension \(d\ge1\) and every fixed output dimension \(m\ge1\).
\end{theorem}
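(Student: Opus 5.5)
The plan is to augment the satisfiability target of Lemma~\ref{lem_sat_target} with a small nonzero \emph{baseline} bump whose support is disjoint from \([0,1]^d\). The baseline keeps the target nonzero even when \(\Psi\) is unsatisfiable. Its \(L^p\) norm will be strictly below the tolerance, so the zero network is still admissible in that case. When \(\Psi\) is satisfiable, the contribution of \(f_{\Psi,d}\) will exceed the tolerance. Lemma~\ref{lem_minimum_nonzero_Lp_neurons} then rules out every network with at most \(d+1\) hidden neurons.

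\textbf{Construction.} Let \(\nu=\nu_{\eta,d}\). Let \(F_0\) be the box-frustum function for the box \([2,3]^d\) with \(\rho=\frac14\). By Lemma~\ref{lem_polyhedral_frustum_realization}, \(F_0\) is a rational two-hidden-layer network with \(0\le F_0\le 1\), equal to one on \([2,3]^d\), and supported in \([\frac74,\frac{13}4]^d\). That set has measure \((3/2)^d\le 2^d\) and is disjoint from \([0,1]^d\). Set
\[
\delta=\frac{\nu}{4m2^d},\qquad \varepsilon_{\Psi,d}=\frac{\nu}{2}.
\]
Both numbers are rational with polynomial bit length, since \(d\) and \(m\) are fixed and \(\nu=2^{-(n+4)-(d-1)+1}\). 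Define the target componentwise by
\[
(\mathbf f_{\mathcal S})_1=f_{\Psi,d}+\delta F_0,\qquad (\mathbf f_{\mathcal S})_j=\delta F_0 \quad (j\ge2).
\]
This is realized by placing the network of Lemma~\ref{lem_sat_target} and a copy of the \(F_0\) network in parallel. The shallower branch is padded with identity ReLU neurons on nonnegative quantities so that the depths match, and the outputs are combined affinely. Every component is a finite sum of nonnegative, compactly supported, Lipschitz, continuous piecewise affine functions, so it has all four properties. The target is nonzero because \(\delta F_0\ne0\). The whole construction runs in polynomial time.

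\textbf{Unsatisfiable case.} Lemma~\ref{lem_sat_target} gives \(f_{\Psi,d}=0\). Take \(\Phi\equiv0\), which is the constant network with zero hidden neurons. Its error satisfies
\[
\|\mathbf f_{\mathcal S}\|_{L^p}^p\le m\,\delta^p\,2^d .
\]
Hence \(\|\mathbf f_{\mathcal S}\|_{L^p}\le m^{1/p}2^{d/p}\delta\le m2^d\delta=\nu/4<\varepsilon_{\Psi,d}\). Therefore \(\operatorname{OPT}=0\).

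\textbf{Satisfiable case.} Suppose \(\Phi\) is a two-hidden-layer network with at most \(d+1\) hidden neurons.
\begin{itemize}
\item If \(\Phi\notin L^p\), then \(\Phi-\mathbf f_{\mathcal S}\notin L^p\), because \(\mathbf f_{\mathcal S}\in L^p\). The error is then infinite.
\item If \(\Phi\in L^p\), then \(\Phi=0\). This follows from Lemma~\ref{lem_minimum_nonzero_Lp_neurons}. The degenerate case of a zero-width layer is included, since \(\Phi\) is then constant and the only constant in \(L^p\) is zero.
\end{itemize}
In the second case the error equals \(\|\mathbf f_{\mathcal S}\|_{L^p}\), which is at least \(\|f_{\Psi,d}\|_{L^p}\) because all terms are nonnegative. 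Lemma~\ref{lem_sat_target} gives \(\|f_{\Psi,d}\|_{L^p}\ge\nu^{1/p}\). Since \(\nu\le1\) and \(p\ge1\), we have \(\nu^{1/p}\ge\nu>\nu/2=\varepsilon_{\Psi,d}\). So no network with at most \(d+1\) hidden neurons achieves error below \(\varepsilon_{\Psi,d}\), and \(\operatorname{OPT}\ge d+2\).

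\textbf{Conclusion.} The map \(\Psi\mapsto(\mathcal S_{\Psi,d,m},\varepsilon_{\Psi,d})\) is computable in polynomial time. By the two cases, \(\Psi\in3\)-SAT if and only if the image lies in \(\mathsf{LARGE}_{p,d,m}\), so that language is NP-hard. Exact neuron minimization therefore cannot be solved in polynomial time unless P\(=\)NP, since computing \(\operatorname{OPT}\) decides \(\mathsf{LARGE}_{p,d,m}\).

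The main subtleties are twofold. First, the tolerance must be rational, which is why \(\nu/2\) is used instead of \(\nu^{1/p}/2\). Second, approximants that are not in \(L^p\) must be handled separately, because Lemma~\ref{lem_minimum_nonzero_Lp_neurons} applies only to \(\Phi\in L^p\).
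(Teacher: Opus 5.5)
Your proposal is correct and follows essentially the same route as the paper. Both arguments add a small box-frustum baseline with support disjoint from \([0,1]^d\), scaled so that its norm is below \(\varepsilon_{\Psi,d}=\nu_{\eta,d}/2\), which makes the zero network feasible in the unsatisfiable case; in the satisfiable case both use \(\nu_{\eta,d}^{1/p}\ge\nu_{\eta,d}>\varepsilon_{\Psi,d}\) together with Lemma~\ref{lem_minimum_nonzero_Lp_neurons} to force at least \(d+2\) neurons. The differences are cosmetic: the paper uses the box \([2,\frac{9}{4}]^d\), puts the baseline only in the first coordinate with \(\delta=\nu_{\eta,d}/4\), and adds a finiteness remark via Corollary~\ref{cor_vector_frustum_density} that the statement does not require.
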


\begin{proof}
It is enough to reduce from the restriction of \(3\)-SAT to formulas having at least one variable and one clause. This restriction remains NP-hard.

Let \(\Psi\) have \(n\ge1\) variables. Construct \(f_{\Psi,d}\), \(\eta\), and \(\nu_{\eta,d}\) according to Lemma \ref{lem_sat_target}. Thus
\[
\eta=2^{-(n+4)}
\]
and
\[
\nu_{\eta,d}
=
2\eta
\left(\frac12\right)^{d-1}
=
2^{-n-d-2}.
\]

Define
\[
b_d(\mathbf x)
=
\sigma
\left(
1
-
4
\sum_{j=1}^d
\left(
\sigma(2-x_j)
+
\sigma\left(x_j-\frac94\right)
\right)
\right).
\]
This is the box-frustum function associated with
\[
Q_d
=
\left[2,\frac94\right]^d
\]
and transition parameter \(1/4\). It satisfies
\[
0\le b_d\le1,
\]
equals one on \(Q_d\), and has a transition region where the total violation in its definition lies strictly between \(0\) and \(1/4\). If
\[
b_d(\mathbf x)>0,
\]
then every individual violation in its definition is less than \(1/4\). Consequently,
\[
\operatorname{supp}b_d
\subset
\left[\frac74,\frac52\right]^d.
\]
Therefore
\[
\|b_d\|_{L^p(\mathbb R^d)}^p
\le
\mu
\left(
\left[\frac74,\frac52\right]^d
\right)
=
\left(\frac34\right)^d
\le1,
\]
and hence
\[
\|b_d\|_{L^p(\mathbb R^d)}
\le1.
\]

Set
\[
\varepsilon_{\Psi,d}
=
\frac{\nu_{\eta,d}}{2}
=
2^{-n-d-3}
\]
and
\[
\delta_{\Psi,d}
=
\frac{\nu_{\eta,d}}{4}
=
2^{-n-d-4}.
\]
Define
\[
\mathbf F_{\Psi,d,m}(\mathbf x)
=
\left(
f_{\Psi,d}(\mathbf x)
+
\delta_{\Psi,d}b_d(\mathbf x),
0,
\ldots,
0
\right).
\]
When \(m=1\), the trailing zero coordinates are absent.

Construct \(\mathcal S_{\Psi,d,m}\) by placing the networks for \(f_{\Psi,d}\) and \(b_d\) in parallel. If the two branches have different depths, identity ReLU neurons propagate the nonnegative output of the shorter branch through the remaining layers. The affine output layer returns
\[
f_{\Psi,d}
+
\delta_{\Psi,d}b_d
\]
in its first coordinate and zero in every remaining coordinate.

The network \(\mathcal S_{\Psi,d,m}\) has rational parameters and polynomial size. The numbers
\[
\eta,
\quad
\nu_{\eta,d},
\quad
\varepsilon_{\Psi,d},
\quad
\delta_{\Psi,d}
\]
have polynomial bit length. Since \(d\) and \(m\) are fixed, the baseline network has constant size relative to the size of \(\Psi\).

The support of \(f_{\Psi,d}\) is contained in \([0,1]^d\), while the support of \(b_d\) is contained in \([7/4,5/2]^d\). Thus the two supports are disjoint. In particular,
\[
f_{\Psi,d}=0
\]
and
\[
b_d=1
\]
on \(Q_d\). Hence
\[
\mathbf F_{\Psi,d,m}
=
\delta_{\Psi,d}\mathbf e_1
\]
on \(Q_d\), where
\[
\mathbf e_1=(1,0,\ldots,0)\in\mathbb R^m
\]
is the first standard basis vector. Since \(\delta_{\Psi,d}>0\), every constructed target is nonzero. The target is componentwise nonnegative, compactly supported, globally Lipschitz, and continuous piecewise affine.

Suppose that
\[
\Psi\notin3\text{-SAT}.
\]
Lemma \ref{lem_sat_target} gives
\[
f_{\Psi,d}=0.
\]
Therefore
\[
\mathbf F_{\Psi,d,m}
=
\delta_{\Psi,d}b_d\mathbf e_1.
\]
Using
\[
\|b_d\|_{L^p(\mathbb R^d)}
\le1,
\]
we obtain
\[
\|\mathbf F_{\Psi,d,m}\|_{L^p(\mathbb R^d,\mathbb R^m)}
\le
\delta_{\Psi,d}
=
\frac{\nu_{\eta,d}}4
<
\frac{\nu_{\eta,d}}2
=
\varepsilon_{\Psi,d}.
\]
Since the approximation error of the zero network satisfies
\[
\|0-\mathbf F_{\Psi,d,m}\|_{L^p(\mathbb R^d,\mathbb R^m)}
=
\|\mathbf F_{\Psi,d,m}\|_{L^p(\mathbb R^d,\mathbb R^m)}
<
\varepsilon_{\Psi,d},
\]
the zero network is feasible. Because the hidden-neuron count cannot be negative,
\[
\operatorname{OPT}_{p,d,m}
\left(
\mathcal S_{\Psi,d,m},
\varepsilon_{\Psi,d}
\right)
=
0.
\]
Suppose that
\[
\Psi\in3\text{-SAT}.
\]
Let \(R_\textbf{a}\) be the box supplied by Lemma \ref{lem_sat_target}. Then
\[
\mu(R_\textbf{a})=\nu_{\eta,d}
\]
and
\[
f_{\Psi,d}=1
\]
on \(R_\textbf{a}\). Since
\[
R_\textbf{a}\subset[0,1]^d
\]
and
\[
\operatorname{supp}b_d
\subset
\left[\frac74,\frac52\right]^d,
\]
one has
\[
b_d=0
\]
on \(R_a\). Therefore
\[
\mathbf F_{\Psi,d,m}
=
\mathbf e_1
\]
on \(R_\textbf{a}\).

It follows that
\[
\|\mathbf F_{\Psi,d,m}\|_{L^p(\mathbb R^d,\mathbb R^m)}^p
\ge
\nu_{\eta,d}.
\]
Since
\[
0<\nu_{\eta,d}\le1
\]
and
\[
p\ge1,
\]
we have
\[
\|\mathbf F_{\Psi,d,m}\|_{L^p(\mathbb R^d,\mathbb R^m)}
\ge
\nu_{\eta,d}^{1/p}
\ge
\nu_{\eta,d}
=
2\varepsilon_{\Psi,d}.
\]
Thus the zero network is not feasible and is therefore excluded from the set of feasible approximating networks.

Let \(\Phi\) be any feasible two-hidden-layer ReLU network. Feasibility gives
\[
\Phi-\mathbf F_{\Psi,d,m}
\in
L^p(\mathbb R^d,\mathbb R^m).
\]
Since
\[
\mathbf F_{\Psi,d,m}
\in
L^p(\mathbb R^d,\mathbb R^m),
\]
the triangle inequality gives
\[
\Phi\in L^p(\mathbb R^d,\mathbb R^m).
\]
The function \(\Phi\) is nonzero because the zero function is not feasible. Lemma \ref{lem_minimum_nonzero_Lp_neurons} shows that \(\Phi\) has at least \(d+2\) hidden neurons. Therefore
\[
\operatorname{OPT}_{p,d,m}
\left(
\mathcal S_{\Psi,d,m},
\varepsilon_{\Psi,d}
\right)
\ge
d+2.
\]

The optimum is finite. Indeed, every component of \(\mathbf F_{\Psi,d,m}\) belongs to \(L^p(\mathbb R^d)\). Corollary \ref{cor_vector_frustum_density} gives a two-hidden-layer ReLU network whose approximation error is less than \(\varepsilon_{\Psi,d}\).

The construction of
\[
\mathcal S_{\Psi,d,m}
\]
and
\[
\varepsilon_{\Psi,d}
\]
is polynomial in the size of \(\Psi\). We have proved
\[
\Psi\in3\text{-SAT}
\Longleftrightarrow
\left(
\mathcal S_{\Psi,d,m},
\varepsilon_{\Psi,d}
\right)
\in
\mathsf{LARGE}_{p,d,m}.
\]
This is a polynomial-time many-one reduction from \(3\)-SAT. Therefore
\[
\mathsf{LARGE}_{p,d,m}
\]
is NP-hard. An exact minimization algorithm would decide this language by comparing the returned optimum with \(d+2\).
\end{proof}

\begin{theorem}[NP-hardness of exact neuron minimization]
\label{thm_exact_neuron_minimization}
Fix integers \(d\ge1\) and \(m\ge1\), and fix \(1\le p<\infty\). Computing
\[
\operatorname{OPT}_{p,d,m}(\mathcal S,\varepsilon)
\]
exactly is NP-hard.
\end{theorem}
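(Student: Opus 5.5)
The plan is to obtain Theorem~\ref{thm_exact_neuron_minimization} as a direct consequence of Theorem~\ref{thm_all_dimensional_architecture_gap}, using a polynomial-time Turing reduction (in fact a one-query reduction) from \(3\)-SAT. Suppose an oracle \(\mathcal A\) returns \(\operatorname{OPT}_{p,d,m}(\mathcal S,\varepsilon)\) exactly for every rational target network \(\mathcal S\) with \(\mathbf f_{\mathcal S}\in L^p(\mathbb R^d,\mathbb R^m)\) and every rational \(\varepsilon>0\). The output may be \(+\infty\), although the instances we query will have finite optima.

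Given a \(3\)-CNF formula \(\Psi\), we first handle the degenerate cases. If \(\Psi\) has no variables or no clauses, we decide satisfiability directly in polynomial time: an empty conjunction is satisfiable, and with no variables every clause is empty and hence false. Otherwise \(\Psi\) has \(n\ge1\) variables and \(s\ge1\) clauses, and we construct in polynomial time the pair \((\mathcal S_{\Psi,d,m},\varepsilon_{\Psi,d})\) from Theorem~\ref{thm_all_dimensional_architecture_gap}.

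This pair is a legitimate instance of Definition~\ref{def_neuron_minimization}. The network has rational parameters, and the tolerance \(\varepsilon_{\Psi,d}=2^{-n-d-3}\) is a positive rational. The realization is compactly supported and bounded, since it is continuous, so it lies in \(L^p(\mathbb R^d,\mathbb R^m)\).

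We query \(\mathcal A\) once and accept if and only if the returned value is at least \(d+2\). By the gap in Theorem~\ref{thm_all_dimensional_architecture_gap}, this answer is correct. Unsatisfiable formulas give optimum \(0\), and satisfiable formulas give optimum at least \(d+2\), which is finite by Corollary~\ref{cor_vector_frustum_density}. Since \(d+2\ge3>0\), the two cases cannot overlap. Comparing an integer with the constant \(d+2\) takes constant time, so the whole procedure runs in polynomial time relative to the oracle. Because \(3\)-SAT is NP-hard, computing \(\operatorname{OPT}_{p,d,m}\) exactly is NP-hard.

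There is no real mathematical obstacle, since all the content lives in Theorem~\ref{thm_all_dimensional_architecture_gap}. The one point needing care is the notion of hardness for a function problem. I would state explicitly that NP-hardness here means hardness under polynomial-time (Cook) reductions, and note that it even holds under a single-query reduction, because \(\mathsf{LARGE}_{p,d,m}\) is many-one reducible to the function problem. I would also note that the input encoding, namely the network's rational parameters in binary and \(\varepsilon\) in binary, has size polynomial in \(|\Psi|\). This ensures the oracle call is made on an instance of polynomial size.
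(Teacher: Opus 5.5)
Your proposal is correct and follows the paper's proof. You build \((\mathcal S_{\Psi,d,m},\varepsilon_{\Psi,d})\) via the architecture-gap theorem, make one call to an exact solver, and accept exactly when the returned optimum is at least \(d+2\); your additional remarks on degenerate formulas (the paper instead restricts \(3\)-SAT to formulas with at least one variable and one clause), on the target lying in \(L^p\), and on single-query hardness for the function problem are sound refinements rather than a different route.
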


\begin{proof}
Given a \(3\)-CNF formula \(\Psi\), Theorem \ref{thm_all_dimensional_architecture_gap} constructs
\[
\left(\mathcal S_{\Psi,d,m},\varepsilon_{\Psi,d}\right)
\]
in polynomial time and guarantees
\[
\Psi\in3\text{-SAT}
\Longleftrightarrow
\operatorname{OPT}_{p,d,m}
\left(
\mathcal S_{\Psi,d,m},
\varepsilon_{\Psi,d}
\right)
\ge d+2.
\]
Thus, if the optimum could be computed exactly in polynomial time, then \(3\)-SAT could be decided in polynomial time by checking whether the optimum is at least \(d+2\). Hence exact two-hidden-layer ReLU neuron minimization under the \(L^p\) approximation constraint is NP-hard.
\end{proof}

\section{Conclusion}

This paper establishes that minimizing the total number of hidden neurons in two-hidden-layer ReLU networks under an \(L^p\) approximation constraint is NP-hard for every fixed input dimension \(d\ge1\), output dimension \(m\ge1\), and \(1\le p<\infty\). The proof gives a polynomial-time reduction from \(3\)-SAT using rational ReLU target networks whose realizations are nonzero, componentwise nonnegative, compactly supported, globally Lipschitz, and continuous piecewise affine. For an unsatisfiable formula, the zero network satisfies the approximation constraint and the optimum is zero. For a satisfiable formula, every feasible two-hidden-layer ReLU network has at least \(d+2\) hidden neurons. This gap establishes a worst-case computational barrier to exact architecture selection and motivates the use of heuristic, approximate, or structurally restricted methods for ReLU network design.

\bibliographystyle{IEEEtran}
\bibliography{references}

\end{document}